\title{Fairness in Streaming Submodular Maximization: Algorithms and Hardness}
\newtheorem{theorem}{Theorem}[section]
\newtheorem{lemma}[theorem]{Lemma}
\newtheorem{observation}[theorem]{Observation}
\newtheorem{claim}[theorem]{Claim}
\newtheorem{definition}[theorem]{Definition}
\newenvironment{proof}{\par\noindent\textit{Proof.}}{$\Box$\par\bigskip\par}
\crefname{theorem}{Theorem}{Theorems}
\Crefname{lemma}{Lemma}{Lemmas}
\Crefname{claim}{Claim}{Claims}
\Crefname{fact}{Fact}{Facts}
\Crefname{observation}{Observation}{Observations}
\Crefname{invariant}{Invariant}{Invariants}
\newcounter{algorithmicH}
\let\oldalgorithmic\algorithmic
\renewcommand{\algorithmic}{%
  \stepcounter{algorithmicH}
  \oldalgorithmic}
\renewcommand{\theHALG@line}{ALG@line.\thealgorithmicH.\arabic{ALG@line}}
\newcommand{\OPT}{\operatorname{OPT}}
\newcommand{\bbRp}{{\mathbb{R}_{\ge 0}}}
\newcommand{\EE}{\operatorname{\mathbb{E}}}
\newcommand{\err}{\operatorname{err}}
\newcommand{\rb}[1]{\left( #1 \right)} 
\newcommand{\Greedy}{\textsc{Greedy}\xspace}
\newcommand{\FairGreedy}{\textsc{Fair-Greedy}\xspace}
\newcommand{\Random}{\textsc{Random}\xspace}
\newcommand{\FairRandom}{\textsc{Fair-Random}\xspace}
\newcommand{\MatroidCons}{\textsc{UpperBounds}\xspace}
\newcommand{\Sieve}{\textsc{SieveStreaming}\xspace}
\newcommand{\ReservoirSample}{\operatorname{Reservoir-Sample}}
\newcommand{\Our}{\textsc{Fair-Streaming}\xspace}
\newcommand{\OurNonMonotone}{\textsc{Fair-Sample-Streaming}\xspace}
\newcommand{\bbZ}{\mathbb{Z}}
\newcommand{\bbR}{\mathbb{R}}
\newcommand{\cA}{\mathcal{A}}
\newcommand{\cM}{\mathcal{M}}
\newcommand{\cB}{\mathcal{B}}
\newcommand{\cF}{\mathcal{F}}
\newcommand{\clF}{\tilde{\cF}}
\newcommand{\marginal}[2]{f\rb{#1 \mid #2}}
\DeclarePairedDelimiter{\floor}{\lfloor}{\rfloor}
\DeclarePairedDelimiter{\ceil}{\lceil}{\rceil}
\newcommand{\comment}[1]{}
\DeclareMathOperator*{\argmin}{argmin}
\DeclareMathOperator*{\argmax}{argmax}
\author{%
	Marwa El~Halabi\thanks{Equal contribution.} \\
	MIT CSAIL\\
	\texttt{marwash@mit.edu} \\
	\And
	Slobodan Mitrović\footnotemark[1] \\
	MIT CSAIL\\
	\texttt{slobo@mit.edu} \\
	\And
	Ashkan Norouzi-Fard\footnotemark[1] \\
	Google Zurich \\
	\texttt{ashkannorouzi@google.com} \\
	 \And
	Jakab Tardos\footnotemark[1] \\
	EPFL \\
	\texttt{jakab.tardos@epfl.ch} \\
	\And
	 Jakub Tarnawski\footnotemark[1]  \\
	Microsoft Research \\
  \texttt{jatarnaw@microsoft.com} \\
}
\begin{document}

\maketitle

\begin{abstract}
Submodular maximization has become established as the method of choice for the task of selecting representative and \emph{diverse} summaries of data. However, if datapoints have sensitive attributes such as gender or age, such
machine learning algorithms, left unchecked, are known to exhibit \emph{bias}: under- or over-representation of particular groups. This has made the design of \emph{fair} machine learning algorithms increasingly important. In this work we address the question: \textit{Is it possible to create fair summaries for massive datasets?}
To this end, we develop the first streaming approximation algorithms for submodular maximization under fairness constraints, for both monotone and non-monotone functions. We validate our findings empirically on exemplar-based clustering, movie recommendation, DPP-based summarization, and maximum coverage in social networks, showing that fairness constraints do not significantly impact utility.
\end{abstract}	

\section{Introduction}

Machine learning algorithms are increasingly being used to assist human decision making. 
This led to concerns about the potential for bias and discrimination in automated decisions, especially in sensitive domains such as voting, hiring, criminal justice, access to credit, and higher-education \cite{executive2016, corbett2017, salem2019, faenza2020}.
To mitigate such issues, there has been a growing effort towards developing \emph{fair} algorithms for several fundamental problems, such as classification \cite{zafar2017},  ranking \cite{celis2018ranking}, clustering \cite{chierichetti2017, backurs2019, jia2020, anegg2020},  bandit learning \cite{joseph2016, liu2017},  voting \cite{celis2017voting}, matching \cite{chierichetti2019matroids}, influence maximization \cite{tsang2019}, and diverse data summarization \cite{celis2018DPP}.  

In this work,  we address fairness in another important class of problems, that of \emph{streaming submodular maximization} subject to a cardinality constraint. Submodular functions are set functions that satisfy a diminishing returns property, which naturally occurs in a variety of machine learning problems.  
In particular, {streaming submodular maximization} is a natural model for \emph{data summarization}: the task of extracting a representative subset of moderate size from a large-scale dataset. 
Being able to generate summaries efficiently and on-the-fly is critical to cope with 
the massive volume of modern datasets, which is often produced so rapidly that  it cannot even be stored in memory.
In many applications, such as exemplar-based clustering \cite{DBLP:conf/iccv/DueckF07},  document \cite{lin-bilmes-2011-class, dasgupta-etal-2013-summarization} and corpus summarization \cite{DBLP:conf/cikm/SiposSSJ12}, and recommender systems \cite{DBLP:conf/kdd/El-AriniG11, DBLP:conf/kdd/El-AriniVSG09}, 
this challenge can be formulated as a streaming submodular maximization problem subject to a cardinality constraint. An extensive line of research focused on developing efficient algorithms in this context \cite{chakrabarti2014submodular, chekuri2015streaming, buchbinder2014online, Badanidiyuru2014, norouzi2018beyond, Feldman2018}.

For monotone objectives, a one pass streaming algorithm achieving $(1/2 - \epsilon)$-approximation
was proposed in \cite{Badanidiyuru2014} and shown to be tight in~\cite{feldman2020oneway}. 
For non-monotone objectives, the state-of-the-art approximation is $1/5.82$, achieved by a randomized algorithm proposed in \cite{Feldman2018}.
To the best of our knowledge, submodular maximization under fairness constraints has only been considered, in the \emph{offline} setting, for monotone objectives.
Celis et al.~\cite{celis2017voting}
provide a $(1-1/e)$-approximation based on the continuous greedy algorithm \cite{calinescu2011maximizing}.
In this paper, we provide the first approximation algorithms for submodular maximization under fairness constraints, in the streaming setting, for both monotone and non-monotone objectives.

Characterizing what it means for an algorithm to be fair is an active area of research.
Several notions of fairness have been proposed in the literature, but no universal
metric of fairness exists.
We adopt here the common notion used in various previous works \cite{celis2018DPP, celis2017voting, celis2018ranking, chierichetti2019matroids, chierichetti2017}, where we ask that the solution obtained is \emph{balanced} with respect to some sensitive attribute (e.g., race, gender). Formally, we are given a set $V$ of $n$ items (e.g., people), where each item is assigned a color $c$ encoding a sensitive attribute. Let $V_1, \cdots, V_C$ be the corresponding $C$ \emph{disjoint} groups of items sharing the same color.
We say that a selection of items $S \subseteq V$ is \emph{fair}  if it satisfies $\ell_c \leq |S \cap V_c| \leq u_c$ for a given choice of lower and upper bounds $\ell_c, u_c \in \bbZ_{\ge0}$, often set to be proportional to the fraction of items of color $c$, i.e., $|V_c|/n$.
 This definition captures several other existing notions of fairness such as statistical parity \cite{dwork2012fairness}, diversity rules (e.g., $80\%$-rule) \cite{cohoon2013, biddle2006adverse},
and proportional representation rules \cite{monroe1995fully, Brill2017} (see
\cite[Sect. 4]{celis2017voting}).

\subsection{Our contribution}
In this work, we develop a new approach for fair submodular maximization. We show how to reduce this problem to submodular maximization subject to a matroid constraint.
In the case of monotone functions, our reduction preserves the approximation ratio and the number of oracle calls of the corresponding algorithm for the matroid constraint. In the non-monotone case, this reduction does not hold anymore, but it still plays an important role in our approach.

\paragraph{The monotone case} Here we achieve two results, with respect to the memory requirement. First, a $1/2$-approximate algorithm that uses an exponential in $k$ memory. This result is known to be tight due to~\cite{feldman2020oneway}.
Second, we design a low-memory efficient algorithm,
matching the state-of-the-art result of the partition matroid, a special case of our problem. Namely, our proposed algorithm achieves a $1/4$-approximation using only $O(k)$ memory, and processes each element of the stream in $O(\log k)$ time and $2$ oracle calls. These results are discussed in \cref{sec:monotone}.

\paragraph{The non-monotone case} In this context, we introduce the notion of \emph{excess ratio}, denoted by $q$ and defined as $1-\max_{c \in C} \ell_c/|V_c|$. This refers to the ``freedom'' that an algorithm has in omitting elements from the solution. If the excess ratio is close to $0$, then for at least one of the colors, the total number of elements in the stream is close to the lower bound. In this case, an algorithm has little flexibility in terms of which elements it chooses from this color. Conversely, if the excess ratio is close to $1$, then the total number of elements for every color is significantly higher than the corresponding lower bound.

We show that the excess ratio is closely tied to the hardness of 
fair non-monotone submodular maximization in the streaming setting. Indeed, we propose a $q/5.82$-approximation algorithm using $O(k)$ memory, and then show that any algorithm that achieves a better than $q$-approximation requires $\Omega(n)$ memory. 
These results are discussed in \cref{sec:nonmonotone}. 
Note that in practice, the size of the summary is expected to be significantly smaller than the size of the input. Hence, it is natural to expect that the excess ratio will be close to $1$, 
and thus our algorithm will perform well on real-world applications.

\paragraph{Empirical evaluation}
We study the empirical performance of our algorithms on various real-life tasks where being fair is important. We observe that our algorithms allow us to enforce fairness constraints at the cost of a small loss in utility, while also matching the efficiency and number of oracle calls of ``unfair'' state-of-the-art algorithms.

\subsection{Additional related work}

Submodular maximization has been extensively studied. The setting most similar to ours is that of streaming submodular maximization under a matroid constraint.
The first result in this setting, for monotone functions, is by \cite{chakrabarti2014submodular} which proposed a $1/4p$-approximation algorithm under $p$-matroid constraints using $O(k)$ memory,
which was later extended to $p$-matchoid constraints in \cite{chekuri2015streaming}. For a single matroid constraint, the best known approximation is achieved by \cite{huang2020approximability}, who proposed a $1/2$-approximation algorithm using $k^{O(k)}$ memory. This is essentially tight, as \cite{feldman2020oneway} shows that a $(1/2+\epsilon)$-approximation of monotone submodular maximization requires $\Omega(n)$ space, even for cardinality constraint, for any positive $\epsilon$.
For non-monotone functions, 
the first streaming algorithm for this problem appears in \cite{chekuri2015streaming}, which achieves an approximation ratio of $(1-\epsilon)(2- o(1))/(8 + e)p$ with $O(k \log k)$ memory.
This was improved in \cite{Feldman2018} to a $1/(2p+2\sqrt{p(p+1)}+1)$-approximation using $O(k)$ memory.
The latter implies the best known result for non-monotone functions under a single matroid constraint, with $1/(3+2\sqrt2) \approx 1/5.82$-approximation.

In the sequential setting, \cite{celis2017voting} studied the fair multiwinner
voting problem, which they cast as a fair submodular maximization problem,
and presented a $(1-1/e)$-approximation algorithm for it. 
They also considered the setting in which color groups can overlap. In this setup, even checking feasibility is NP-hard, when elements can belong to $3$ or more colors. Nevertheless, if fairness constraints are allowed to be {\it nearly} satisfied,  \cite{celis2017voting} gives a $(1-1/e-o(1))$-approximation algorithm.
\cite{kazemi18a} studied data summarization with privacy and fairness constraints, but adopted a different notion of fairness, where part of the data is deleted
or masked due to fairness criteria.


\section{Preliminaries}

We consider a (potentially large) collection $V$ of $n$ items, also called the \emph{ground set}. We study the problem of maximizing a \emph{non-negative submodular function} $f : 2^V \to \bbRp$. Given two sets $X, Y \subseteq V$, the \emph{marginal gain} of $X$ with respect to $Y$ is defined as
\[
	\marginal{X}{Y} = f(X \cup Y) - f(Y) \,,
\]
which quantifies the change in value when adding $X$ to $Y$.
The function $f$ is \emph{submodular} if for any two sets $X$ and $Y$ such that $X \subseteq Y \subseteq V$ and any element $e \in V \setminus Y$ we have
\[
	\marginal{e}{X} \ge \marginal{e}{Y}.
\]
We say that $f$ is \emph{monotone} if for any element $e \in V$ and any set $Y \subseteq V$ it holds that $\marginal{e}{Y} \ge 0$; otherwise, if $\marginal{e}{Y} < 0$ for some $e \in V$ and $Y \subseteq V$, we say that $f$ is \emph{non-monotone}.
Throughout the paper, we assume that $f$ is given in terms of a value oracle that computes $f(S)$ for given $S \subseteq V$. We also assume that $f$ is \emph{normalized}, i.e., $f(\emptyset) = 0$.

\paragraph*{Fair submodular maximization}
We assume that the ground set $V$ is colored so that each element has exactly one color.
We index the colors $c = 1, 2, ..., C$ and denote by $V_c$ the set of elements of color $c$.
Thus $V = V_1 \cup ... \cup V_C$ is a partition.
For each color $c$ we assume that we are given a lower and an upper bound on the number of elements of color $c$ that a feasible solution must contain. These represent fairness constraints and are denoted by $\ell_c$ and $u_c$, respectively. Let $k \in \bbZ_{\ge0}$ be a global cardinality constraint.
We denote by $\cF$ the set of solutions feasible under these fairness and cardinality constraints, i.e.,
\[\cF = \{ S \subseteq V : |S| \le k, |S \cap V_c| \in [\ell_c, u_c] \text{ for all $c = 1,...,C$} \} \,. \]
The problem of maximizing a function $f$ under \emph{cardinality and fairness constraints} is defined as
selecting a set $S \subseteq V$
with $S \in \cF$
so as to maximize $f(S)$.
We use $\OPT$ to refer to a set maximizing $f$.
We assume that there exists a feasible solution, i.e., $\cF \ne \emptyset$.
In particular, this implies that $\sum_{c=1}^C \ell_c \le k$.

\paragraph*{Matroids}
In our algorithms we often reduce to submodular maximization under a matroid constraint:
the problem of selecting a set $S \subseteq V$ with $S \in \cM$ so as to maximize $f(S)$,
where $\cM$ is a matroid.
We call a family of sets $\cM \subseteq 2^V$ a \emph{matroid} if it satisfies the following properties: $\cM \ne \emptyset$; \textit{downward-closedness:} if $A \subseteq B$ and $B \in \cM$, then $A \in \cM$; \textit{augmentation:} if $A, B \in \cM$ with $|A| < |B|$, then there exists $e \in B$ such that $A + e \in \cM$.

\section{Warm-up: Monotone Sequential Algorithm}
\label{sec:fairgreedy}
In this section, we consider 
the classic sequential setting
and assume that the submodular function $f$ is monotone.
We present a natural greedy algorithm \FairGreedy, and show that it achieves a $1/2$-approximate solution.
The advantage of this algorithm compared to the algorithm provided in~\cite{celis2017voting} based on continuous greedy, is its simplicity and faster running time of $O(|V|k)$.
Moreover, the algorithm and ideas introduced in this section serve as a warm-up for the streaming setting.

The greedy algorithm picks at each step the element that has the largest marginal gain while satisfying some constraint. We start by observing that if this element was only required to satisfy the upper-bound and cardinality constraints, the greedy algorithm might not return a feasible solution. It might reach the global cardinality constraint without satisfying the lower bounds.
Therefore, a more careful selection of the elements is needed.  To that end, we define the following concept.

\begin{definition}\label{def:extendable}
We call a set $S\subseteq V$ \emph{extendable} if it is a subset $S \subseteq S'$ of some feasible solution $S' \in \cF$.
\end{definition}
For a set $S$ to be extendable, it must satisfy the upper bounds: $|S| \le k$ and
$|S \cap V_c| \le u_c$ for all $c = 1, ..., C$.
If $S$ also satisfies the lower bounds ($|S \cap V_c| \ge \ell_c$ for all $c$),
then $S$ is already feasible.
Otherwise, it is necessary to add at least $\ell_c - |S \cap V_c|$ elements of every color $c$ for which $S$ does not yet satisfy the lower bound.
This yields a feasible extension as long as it does not violate the global cardinality constraint $k$.
In short, we have the following simple characterization:
\begin{observation}\label{fact:extendable}
A set $S \subseteq V$ is extendable if and only if
\[ |S \cap V_c| \le u_c \quad \text{ for all $c = 1, ..., C$ } \qquad \text{ and } \qquad \sum_{c=1}^C \max(|S \cap V_c|, \ell_c) \le k \,. \]
\end{observation}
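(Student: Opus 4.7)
The plan is to prove the two directions of the equivalence separately. Both amount to short, essentially bookkeeping arguments once one has the right extension in mind.

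For the ``only if'' direction, I would fix an extendable set $S$ and take a witness $S' \in \cF$ with $S \subseteq S'$. Since $S'$ is feasible, $|S \cap V_c| \le |S' \cap V_c| \le u_c$ for every color $c$, which gives the upper-bound condition. For the sum condition, for each color $c$ I note that $|S' \cap V_c| \ge |S \cap V_c|$ (because $S \subseteq S'$) and $|S' \cap V_c| \ge \ell_c$ (by feasibility), hence $|S' \cap V_c| \ge \max(|S \cap V_c|, \ell_c)$. Summing over $c$ and using $|S'| \le k$ yields $\sum_c \max(|S \cap V_c|, \ell_c) \le |S'| \le k$.

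For the ``if'' direction, I would explicitly construct an extension $S' \supseteq S$. For every color $c$ with $|S \cap V_c| < \ell_c$, pick any $\ell_c - |S \cap V_c|$ elements from $V_c \setminus S$; for all other colors simply keep $S \cap V_c$. Let $S'$ be the union of these sets over all colors together with $S$. Then by construction $|S' \cap V_c| = \max(|S \cap V_c|, \ell_c)$, so $\ell_c \le |S' \cap V_c|$; the upper bound $|S' \cap V_c| \le u_c$ follows from the hypothesis $|S \cap V_c| \le u_c$ together with $\ell_c \le u_c$ (which must hold since $\cF \ne \emptyset$). Finally $|S'| = \sum_c \max(|S \cap V_c|, \ell_c) \le k$ by hypothesis, so $S' \in \cF$.

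The only subtle point, and the one I expect to need a brief justification, is that the required extra elements of color $c$ actually exist, i.e., that $|V_c \setminus S| \ge \ell_c - |S \cap V_c|$. This follows from the standing assumption $\cF \ne \emptyset$, which forces $|V_c| \ge \ell_c$ for every color, and hence $|V_c \setminus S| = |V_c| - |S \cap V_c| \ge \ell_c - |S \cap V_c|$. Everything else is direct counting, so this is really the only place where feasibility of $\cF$ is used.
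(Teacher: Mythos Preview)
Your proposal is correct and follows essentially the same reasoning as the paper, which only sketches the argument informally in the paragraph preceding the observation (satisfy upper bounds, then add $\ell_c - |S \cap V_c|$ elements per deficient color and check the global cardinality). You have simply made explicit the bookkeeping and the use of $\cF \ne \emptyset$ to guarantee $|V_c| \ge \ell_c$ and $\ell_c \le u_c$, which the paper leaves implicit.
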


\begin{wrapfigure}[7]{R}{0.5\textwidth}
\begin{minipage}{0.5\textwidth}
\vspace{-2em}
\begin{algorithm}[H]
    \caption{\FairGreedy \label{alg:simplegreedy}}
    \begin{algorithmic}[1]
        \State $S \leftarrow \emptyset$
        \While{$|S|<k$}
            \State $U \leftarrow \{e \in V \mid S + e \text{ is extendable}\}$
            \State $S \leftarrow S + \argmax_{e \in U} f(e \mid S)$
        \EndWhile
        \State \Return $S$
    \end{algorithmic}
\end{algorithm}
\end{minipage}
\end{wrapfigure}
The \FairGreedy algorithm starts with $S = \emptyset$ and in each step takes the element with highest marginal gain that keeps the solution extendable.

\begin{restatable}{fact}{maingreedy}
\label{lem:greedy}
\FairGreedy is a $1/2$-approximate algorithm with $O(|V|k)$ running time
for fair monotone submodular maximization.
\end{restatable}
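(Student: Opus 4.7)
The plan is to prove that the family $\cI_E:=\{S\subseteq V : S \text{ is extendable}\}$ forms a \emph{matroid}. Once this is established, \FairGreedy is precisely the classical greedy algorithm of Fisher--Nemhauser--Wolsey for monotone submodular maximization under a matroid constraint, and the $1/2$-approximation follows immediately from their analysis. The running time bound is routine: there are $k$ iterations, and by maintaining the counters $|S\cap V_c|$ one can check extendability of $S+e$ in $O(1)$ per candidate $e\in V$ (via \cref{fact:extendable}), so each iteration takes $O(|V|)$ time and one oracle call per element.

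To show $\cI_E$ is a matroid, I would use \cref{fact:extendable} to rewrite it as
\[
\cI_E=\bigl\{S\subseteq V : |S\cap V_c|\le u_c \text{ for all } c,\text{ and } g(S)\le R\bigr\},
\]
where $g(S):=\sum_c\max(|S\cap V_c|-\ell_c,0)$ and $R:=k-\sum_c\ell_c\ge 0$. Downward closure is immediate, since both conditions are monotone in $|S\cap V_c|$. For augmentation, take $A,B\in\cI_E$ with $|A|<|B|$ and set $a_c=|A\cap V_c|$, $b_c=|B\cap V_c|$. If some color $c$ satisfies both $a_c<b_c$ and $a_c<\ell_c$, then any $e\in(B\setminus A)\cap V_c$ works, because $a_c+1\le \ell_c\le u_c$ and $g(A+e)=g(A)\le R$. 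Otherwise, every $c$ with $a_c<b_c$ has $a_c\ge\ell_c$; combined with the identity $|S|=\sum_c\min(|S\cap V_c|,\ell_c)+g(S)$ applied to both $A$ and $B$, this yields $\sum_c\min(a_c,\ell_c)\ge \sum_c\min(b_c,\ell_c)$ termwise, hence $g(B)-g(A)\ge |B|-|A|\ge 1$ and so $g(A)\le R-1$. Then any $e\in (B\setminus A)\cap V_c$ for such a color satisfies $a_c+1\le b_c\le u_c$ and $g(A+e)=g(A)+1\le R$, placing $A+e$ in $\cI_E$.

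Given that $\cI_E$ is a matroid, two small observations finish the proof. First, when \FairGreedy terminates with $|S|=k$, extendability gives $\sum_c\max(|S\cap V_c|,\ell_c)\le k=\sum_c|S\cap V_c|$; since $|S\cap V_c|\le\max(|S\cap V_c|,\ell_c)$ termwise, equality is forced, so $|S\cap V_c|\ge\ell_c$ for every $c$ and hence $S\in\cF$. Second, every base of $\cI_E$ is feasible, and by monotonicity of $f$ we have $\max_{S\in\cF}f(S)=\max_{S\in\cI_E}f(S)$, so the classical matroid-greedy bound yields $f(S)\ge f(\OPT)/2$. I expect the main obstacle to be the second case of the augmentation argument: a naive attempt worries that when $g(A)$ is already at the budget $R$, every element of $B\setminus A$ would push $A$ out of $\cI_E$; the identity linking $|S|$, $g(S)$, and the deficit $\sum_c\min(|S\cap V_c|,\ell_c)$ is precisely what rules this scenario out.
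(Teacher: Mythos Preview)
Your proposal is correct. The paper explicitly acknowledges your route---extendable sets form a matroid, then invoke Fisher--Nemhauser--Wolsey---as valid in the opening remark of its proof, but the argument it actually writes out is a self-contained matching proof: it orders $\OPT=\{o_1,\dots,o_k\}$ so that the color of $o_j$ coincides with that of $g_j$ whenever possible, shows directly that each prefix $\{g_1,\dots,g_{j-1},o_j\}$ is extendable, and concludes $f(g_j\mid g_1,\dots,g_{j-1})\ge f(o_j\mid g_1,\dots,g_{j-1})$ for every $j$. Your version instead packages the combinatorics into a matroid proof; your verification of the augmentation axiom (via the identity $|S|=\sum_c\min(|S\cap V_c|,\ell_c)+g(S)$) also differs from the paper's separate proof of \cref{lem:matroid}, which checks the basis-exchange axiom on maximal feasible sets. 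Both are clean; the paper's matching proof avoids citing the matroid-greedy theorem, while yours makes the underlying structure explicit and reuses it.

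One small remark on the running time: to get $O(1)$ extendability checks per candidate you must also maintain the running sum $Q=\sum_c\max(|S\cap V_c|,\ell_c)$, not just the per-color counters; with only the counters, recomputing the sum costs $O(C)$ per candidate. The paper spells this out in its implementation section (\cref{sec:checking_extendability}).
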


The analysis of \textsc{Fair-Greedy} is deferred to \cref{sec:proof_of_greedy}.


\section{Monotone Streaming Algorithm}
\label{sec:monotone}

In this section, we present our algorithm for fair \emph{monotone} submodular maximization in the streaming setting,
and we prove its approximation guarantee. We begin by explaining the intuition behind our algorithm. If we removed the lower-bound constraints $|S \cap V_c| \ge \ell_c$ in $\cF$,
then the remaining constraints would give rise to a matroid (a so-called laminar matroid).
There exist efficient streaming algorithms for submodular maximization under matroid constraint (e.g.~\cite{chakrabarti2014submodular,Feldman2018}), which we could use in a black-box manner.
A solution obtained from such an algorithm $\cA$ may of course violate the lower-bound constraints.
We could hope to augment our solution to a feasible one
using ``backup'' elements gathered from the stream
in parallel to $\cA$.
As we are dealing with a \emph{monotone} submodular function,
adding such elements would not hurt the approximation guarantee inherited from $\cA$.

However, doing so might violate the global cardinality constraint $|S| \le k$.
Indeed, as we remarked in \cref{sec:fairgreedy}, not every
set satisfying the upper-bound constraints can be extended to a feasible solution.
Recall that the right constraint to place was for the solution to be \emph{extendable} (\cref{def:extendable}) to a feasible set.
Crucially, we show that such a solution can be efficiently found,
as extendable subsets of $V$ form a matroid.

\begin{lemma} \label{lem:matroid}
Let $\clF \subseteq 2^V$ be the family of all extendable subsets of $V$.
Then $\clF$ is a matroid.
\end{lemma}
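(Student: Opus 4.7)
The plan is to verify the three defining axioms of a matroid directly from the characterization of extendability given in Observation~\ref{fact:extendable}: a set $S$ is extendable iff $|S \cap V_c| \le u_c$ for every $c$ and $\sum_c \max(|S \cap V_c|, \ell_c) \le k$. Throughout, I will write $a_c = |A \cap V_c|$, $b_c = |B \cap V_c|$, and $s_X = \sum_c \max(|X \cap V_c|, \ell_c)$.

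Non-emptiness is immediate: $\emptyset \in \clF$, since $\sum_c \ell_c \le k$ follows from the standing assumption $\cF \neq \emptyset$. Downward-closedness is equally transparent: if $A \subseteq B$ and $B$ is extendable, taking any feasible $B' \supseteq B$ shows $A \subseteq B'$ and hence $A \in \clF$; alternatively, one checks directly that shrinking to $A$ can only decrease each $|S \cap V_c|$ and each $\max(|S \cap V_c|, \ell_c)$, so both conditions of Observation~\ref{fact:extendable} are preserved.

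The real work is the augmentation axiom: given $A, B \in \clF$ with $|A| < |B|$, produce $e \in B \setminus A$ such that $A + e \in \clF$. I will split on whether $s_A < k$ or $s_A = k$. In the easy case $s_A < k$, since $|A| < |B|$ there must exist some color $c$ with $b_c > a_c$; any $e \in (B \setminus A) \cap V_c$ then satisfies $a_c + 1 \le b_c \le u_c$, and the quantity $s_A$ grows by at most $1$ upon adding $e$, so $A + e$ remains extendable.

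The main obstacle is the tight case $s_A = k$, where I need an element $e$ of some color $c$ with both $b_c > a_c$ and $a_c < \ell_c$ (so that its inclusion leaves $s_A$ unchanged). Suppose toward contradiction that no such color exists, i.e., for every $c$ with $b_c > a_c$ we have $a_c \ge \ell_c$. The key pointwise inequality
\[ \max(b_c, \ell_c) - \max(a_c, \ell_c) \ge b_c - a_c \]
holds for every color (a routine case analysis on the signs of $a_c - \ell_c$ and $b_c - \ell_c$), and, crucially, under our contradiction hypothesis it holds with \emph{equality} whenever $b_c > a_c$, while retaining $\ge b_c - a_c$ for colors with $b_c \le a_c$. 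Summing over $c$ yields
\[ s_B - s_A \ge \sum_c (b_c - a_c) = |B| - |A| > 0, \]
contradicting $s_B \le k = s_A$. This contradiction delivers the desired augmenting element and completes the proof that $\clF$ is a matroid.
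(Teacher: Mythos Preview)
Your proof is correct. One clarification worth making: the pointwise inequality $\max(b_c,\ell_c)-\max(a_c,\ell_c)\ge b_c-a_c$ is \emph{not} valid unconditionally---it fails exactly when $a_c<\ell_c$ and $b_c>a_c$---but since that whole paragraph is already inside the contradiction hypothesis ``every color with $b_c>a_c$ has $a_c\ge\ell_c$'', that failure case is ruled out and your case analysis goes through. Summing over colors then gives $s_B-s_A\ge|B|-|A|>0$, contradicting $s_B\le k=s_A$, as you say.

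The paper takes a genuinely different route. Rather than verifying the augmentation axiom directly on $\clF$, it shows that the \emph{maximal} sets in $\cF$ satisfy the basis-exchange axiom (B2), invokes the classical fact that the downward closure of a base system is a matroid, and then observes that this downward closure equals $\clF$. Your argument is more self-contained: it uses only the independence-system axioms together with the explicit description of extendability in Observation~\ref{fact:extendable}, and avoids any appeal to alternative matroid axiomatizations or to the (separately argued) fact that all maximal feasible sets have the same size $\min(k,\sum_c\min(u_c,|V_c|))$. The paper's approach, by contrast, is a bit more structural---a single basis exchange rather than a global counting argument---and makes the connection to $\cF$ (rather than $\clF$) explicit. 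Both are short; yours is arguably the more direct verification.
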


The proof of \cref{lem:matroid} can be found in \cref{sec:proof_of_lem_matroid}.
Algorithms for submodular maximization under a matroid constraint
require access to a membership oracle for the matroid.
For $\clF$, membership is easy to verify, as
follows from \cref{fact:extendable}.

\begin{wrapfigure}[10]{R}{0.5\textwidth}
\begin{minipage}{0.5\textwidth}
\vspace{-1.9em}
\begin{algorithm}[H]
    \caption{\Our \label{alg:our}}
    \begin{algorithmic}[1]
        \State $S_\cA \leftarrow \emptyset, B_c \leftarrow \emptyset$ for all $c = 1,...,C$
        \For{every arriving element $e$ of color $c$}
            \State process $e$ with algorithm $\cA$
            \If{$|B_c| < \ell_c$}
            \State $B_c \leftarrow B_c + e$ 
            \EndIf
        \EndFor
        \State $S_\cA \leftarrow$ solution of algorithm $\cA$
		\State $S \leftarrow S_\cA$ augmented with elements in sets $B_c$
       
        \State \Return $S$
    \end{algorithmic}
\end{algorithm}
\end{minipage}
\end{wrapfigure}

Now we are ready to present our algorithm \Our for fair monotone submodular maximization.
Let $\cA$ be a streaming algorithm for monotone submodular maximization under a matroid constraint.
\Our runs $\cA$ to construct an extendable set $S_\cA$ that  approximately maximizes~$f$. In parallel, for every color~$c$ we collect a backup set $B_c$ of size $|B_c| = \ell_c$. At the end, the solution $S_\cA$ is augmented to a feasible solution $S$ using a simple procedure:
for every color such that $|S_\cA \cap V_c| < \ell_c$, add any $\ell_c - |S_\cA \cap V_c|$ elements from $B_c$ to satisfy the lower bound.
The pseudocode of \Our  is given as \cref{alg:our}.
Thus we get the following black-box reduction, proved in \cref{sec:proof_of_monotone_blackbox}.

\begin{theorem}
	\label{thm:monotone_blackbox}
	Suppose $\cA$ is a streaming algorithm for
	monotone submodular maximization under a matroid constraint.
	Then there exists a streaming algorithm for fair monotone submodular maximization with the same approximation ratio and memory usage as~$\cA$.
\end{theorem}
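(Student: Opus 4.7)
The plan is to use the matroid $\clF$ of extendable sets (\cref{lem:matroid}) as the constraint for a black-box invocation of the algorithm $\cA$, and then to argue that the backup-based augmentation performed at the end both preserves feasibility and preserves the approximation guarantee thanks to monotonicity. Concretely, I would first observe that since \cref{lem:matroid} ensures $\clF$ is a matroid, and \cref{fact:extendable} gives an explicit membership test that only requires storing the counts $|S \cap V_c|$ per color, $\cA$ can be run with $\clF$ as its matroid constraint without any asymptotic overhead in memory or oracle calls. This produces an extendable set $S_{\cA}$ with $f(S_{\cA}) \ge \alpha \cdot \max_{T \in \clF} f(T)$, where $\alpha$ is the approximation ratio of $\cA$.

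The next step is to compare this quantity against $f(\OPT)$. I would observe that every feasible set $S \in \cF$ also satisfies both conditions of \cref{fact:extendable} (upper bounds and $\sum_c \max(|S \cap V_c|, \ell_c) = |S| \le k$), so $\cF \subseteq \clF$, and in particular $\OPT \in \clF$. Hence $f(S_{\cA}) \ge \alpha \cdot f(\OPT)$.

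I would then verify that the augmentation step returns a feasible solution $S \in \cF$. Since $\cF \neq \emptyset$ we have $|V_c| \ge \ell_c$ for every color, so each backup set indeed reaches $|B_c| = \ell_c$. For colors with $|S_{\cA} \cap V_c| < \ell_c$ we need $\ell_c - |S_{\cA} \cap V_c|$ elements from $B_c \setminus S_{\cA}$, which exist because $B_c \subseteq V_c$ gives $|B_c \cap S_{\cA}| \le |S_{\cA} \cap V_c|$. After augmentation $|S \cap V_c| = \max(|S_{\cA} \cap V_c|, \ell_c)$, which lies in $[\ell_c, u_c]$ since $S_{\cA}$ is extendable and $\ell_c \le u_c$, and $|S| = \sum_c \max(|S_{\cA} \cap V_c|, \ell_c) \le k$ by the second condition in \cref{fact:extendable}. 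Finally, monotonicity of $f$ gives $f(S) \ge f(S_{\cA}) \ge \alpha \cdot f(\OPT)$.

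For the memory claim, the only overhead beyond $\cA$'s state is the backup sets, which together contain at most $\sum_c \ell_c \le k$ elements, plus the $C$ color counters used to check \cref{fact:extendable}. Since any streaming algorithm for matroid submodular maximization already uses $\Omega(k)$ memory, this is absorbed in $\cA$'s memory bound. The main subtlety, and the point I would emphasize, is that running $\cA$ under the constraint $\clF$ (rather than under the ``looser'' laminar matroid that only encodes the upper bounds) is exactly what prevents $S_\cA$ from growing so large that the lower-bound augmentation would overflow the global budget $k$; this is the conceptual work already done by \cref{lem:matroid}, so the remaining proof is essentially bookkeeping.
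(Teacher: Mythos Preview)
Your proposal is correct and follows essentially the same approach as the paper's proof: run $\cA$ on the extendability matroid $\clF$, use $\cF \subseteq \clF$ to compare against $\OPT$, augment with backup elements (feasibility via \cref{fact:extendable}), and invoke monotonicity. You have simply fleshed out details the paper leaves implicit, such as why $B_c \setminus S_\cA$ contains enough elements and why the $O(k)$ backup overhead is absorbed into $\cA$'s memory.
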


Applying \cref{thm:monotone_blackbox} to the algorithm of \cite{huang2020approximability} we get the following result. 

\begin{restatable}[Streaming monotone]{theorem}{halfmonotonestreaming}
	There exists a streaming algorithm for fair monotone submodular maximization that attains $1/2$-approximation and uses $k^{O(k)}$ memory.
\end{restatable}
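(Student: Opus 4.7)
The plan is to invoke \cref{thm:monotone_blackbox} in a black-box fashion, instantiating the abstract streaming algorithm $\cA$ with the state-of-the-art algorithm of Huang et al.~\cite{huang2020approximability} for monotone submodular maximization subject to a single matroid constraint. That algorithm achieves a $1/2$-approximation using $k^{O(k)}$ memory, where $k$ is the rank of the matroid.

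The matroid to which we apply $\cA$ is $\clF$, the family of extendable subsets of $V$ introduced in \cref{def:extendable}. By \cref{lem:matroid}, $\clF$ is indeed a matroid, and its rank is at most the global cardinality constraint $k$, since every extendable set has size at most $k$. Moreover, a membership oracle for $\clF$ can be implemented efficiently using the two simple inequalities of \cref{fact:extendable}, so $\cA$ can be run as intended on $\clF$ inside the reduction of \cref{alg:our}.

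By \cref{thm:monotone_blackbox}, the resulting streaming algorithm for fair monotone submodular maximization inherits the approximation ratio of $\cA$, namely $1/2$, and uses the same asymptotic memory. The only additional overhead introduced by the reduction is the storage of the backup sets $B_c$, each of size $|B_c|\le \ell_c$; since $\sum_c \ell_c \le k$, this adds only $O(k)$ elements on top of the $k^{O(k)}$ memory used by $\cA$, which is absorbed into the $k^{O(k)}$ bound.

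The only real subtlety to verify is that \cref{thm:monotone_blackbox} is genuinely applicable to the algorithm of~\cite{huang2020approximability}, i.e., that it is a single-pass streaming algorithm requiring only a value oracle for $f$ and a membership oracle for the matroid; this is exactly the interface assumed in the statement of \cref{thm:monotone_blackbox}. Once this is checked, the theorem follows immediately.
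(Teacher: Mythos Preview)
Your proposal is correct and follows exactly the same approach as the paper: the paper's entire proof is the single sentence ``Applying \cref{thm:monotone_blackbox} to the algorithm of \cite{huang2020approximability} we get the following result,'' and you have simply unpacked this in more detail (verifying that $\clF$ is a matroid of rank at most $k$, that a membership oracle is available via \cref{fact:extendable}, and that the $O(k)$ backup overhead is absorbed into $k^{O(k)}$).
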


We remark that the $1/2$ approximation ratio is tight
even in the simpler setting of monotone streaming submodular maximization subject to a cardinality constraint~\cite{feldman2020oneway}.\\
A more practical algorithm to use as
$\cA$ in \Our is
the $1/4$-approximation algorithm
of Chakrabarti and Kale~\cite{chakrabarti2014submodular}.
It turns out that we can further adapt and optimize our implementation
to make our algorithm extremely efficient and use only $2$ oracle calls and $O(\log k)$ time per element. We prove \cref{thm:main-monotone-streaming} in \cref{sec:matroid_algorithms}, where we also state the algorithm of~\cite{chakrabarti2014submodular} for completeness.

\begin{restatable}[Streaming monotone]{theorem}{mainmonotonestreaming}
	\label{thm:main-monotone-streaming}
	There exists a streaming algorithm for fair monotone submodular maximization that attains $1/4$-approximation, using $O(k)$ memory. This algorithm uses $O(\log k)$ time and $2$ oracle calls per element.
\end{restatable}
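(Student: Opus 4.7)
The plan is to instantiate Theorem~\ref{thm:monotone_blackbox} with $\cA$ being the streaming algorithm of Chakrabarti and Kale~\cite{chakrabarti2014submodular}, applied to the extendable-sets matroid $\clF$ guaranteed by Lemma~\ref{lem:matroid}. For a single matroid constraint, Chakrabarti--Kale is a $1/4$-approximation for monotone submodular maximization, so Theorem~\ref{thm:monotone_blackbox} immediately yields a $1/4$-approximate streaming algorithm for the fair problem. For the memory bound, the independent set stored by Chakrabarti--Kale has size at most the rank of $\clF$, which is at most $k$, while the backup sets maintained by \Our satisfy $\sum_c |B_c| = \sum_c \ell_c \le k$ by the assumption $\cF \ne \emptyset$; the total working memory is therefore $O(k)$.

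The remaining work is to implement Chakrabarti--Kale on the particular matroid $\clF$ within $O(\log k)$ time and $2$ oracle calls per element. I would maintain, for each color $c$, a min-heap of the elements of $S_\cA \cap V_c$ keyed by their Chakrabarti--Kale weights, together with the counts $|S_\cA \cap V_c|$. When a new element $e$ of color $c$ arrives, one oracle call produces its marginal $f(e \mid S_\cA)$, and extendability of $S_\cA + e$ is tested in $O(1)$ via Observation~\ref{fact:extendable}. If extendable, $e$ is inserted into the color-$c$ heap in $O(\log k)$ time. Otherwise, Observation~\ref{fact:extendable} pinpoints which constraint is violated and hence the set of valid exchange partners: in the upper-bound case, only elements of $V_c$ are swap candidates; in the global-sum case, any element lying in an ``excess'' color $c'$ (with $|S_\cA \cap V_{c'}| > \ell_{c'}$) qualifies, and swapping it out decreases the left-hand side of the global constraint by exactly one while leaving all upper-bound inequalities intact. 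In either case the minimum-weight valid $f$ is located in $O(\log k)$ heap operations, the standard Chakrabarti--Kale swap test is carried out using at most one additional oracle call, and the heaps are refreshed in $O(\log k)$.

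The main obstacle is the bookkeeping in the global-sum case, namely locating the minimum-weight element across all \emph{currently excess} colors. The plan is to maintain, on top of the per-color heaps, a secondary heap over the current minima of the excess-color heaps, updated whenever a color's count crosses the threshold $\ell_c$ or its internal minimum changes; since each arrival triggers only $O(1)$ such updates, the $O(\log k)$ per-element cost is preserved. The post-processing augmentation of $S_\cA$ by elements of the $B_c$ sets happens after the stream ends, so it contributes nothing to the per-element time or oracle-call budget. Since $f$ is monotone, this augmentation can only increase the objective, and the $1/4$ approximation guarantee inherited from Theorem~\ref{thm:monotone_blackbox} is preserved.
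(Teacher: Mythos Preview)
Your approach is exactly the paper's: instantiate Theorem~\ref{thm:monotone_blackbox} with Chakrabarti--Kale on the extendable-sets matroid $\clF$, then implement the swap search via per-color min-heaps together with a secondary heap over the currently ``excess'' colors. The memory and oracle-call accounting also matches (note that in Chakrabarti--Kale the swap test $w(e)\ge 2w(e')$ uses only cached weights, so the two oracle calls are both spent on computing $f(e\mid S_\cA)$; no additional call is needed for the swap decision).

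There is, however, one genuine gap in your case analysis for the global-sum case. When $|S_\cA\cap V_c|<u_c$ but $\sum_{c'}\max(|S_\cA\cap V_{c'}|,\ell_{c'})=k$, the incoming color $c$ is \emph{always} a legitimate swap color, even when $|S_\cA\cap V_c|=\ell_c$ so that $c$ is not ``excess'' by your definition: swapping $e$ for a same-color element leaves every count unchanged, hence preserves extendability. If you consult only the excess-color heap you can miss the true minimum-weight element of $U$, causing Chakrabarti--Kale to forgo a swap it should have made and invalidating the $1/4$ guarantee. The paper handles this by declaring the good colors to be $c$ \emph{together with} the excess colors, and taking the minimum of the per-color heap $P_c$ and the secondary heap; with that one-line fix your argument is complete and identical to the paper's.
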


\section{Non-monotone Streaming Case}
\label{sec:nonmonotone}

We now focus on non-monotone functions. One might consider applying the approach from the previous section, i.e., use a known algorithm for non-monotone submodular maximization under a matroid constraint to find a high quality extendable solution, and then add backup elements to satisfy the lower-bound constraints. However, this approach is more challenging now, as adding backup elements to a solution could drastically decrease its value.

For example, consider the following instance with two colors. Let $V = A \cup B \cup \{x\}$ where $A= \{a_i|i\in[m_1]\}$, $B = \{b_i|i\in[m_2]\}$, each $e \in A \cup B$ is \emph{blue}, and $x$ is \emph{red}. Let $f(S) = |S|$ for each $S \subseteq A \cup B$, and let $x$ ``nullify'' the contributions of $B$ but not the contributions of $A$. Formally,
\[f(S)=\begin{cases}
|S|\ &\text{if $x\not\in S$},\\
|S\cap A|\ &\text{if $x\in S$}.
\end{cases}\]

It is easy to verify that $f$ is submodular (a formal proof is given in the Appendix). Suppose that we have to pick exactly one red element, i.e., $\ell_{\text{red}}=u_{\text{red}} = 1$. This renders all elements in $B$ useless, and the optimal solution takes only elements in $A$. However, before $x$ appears, elements in $A$ and $B$ are \emph{indistinguishable}, since $f(S) = |S|$ for any $S \subseteq A\cup B$. Therefore, if $m_1 \ll m_2$, and $x$ is last in the stream, any algorithm that does not store the entirety of $V$ will pick only a few elements from $A$, thus achieving almost zero objective value once $x$ is included in the solution.

The core difficulty here, and in general, is that $\ell_c$ is nearly as large as $n_c = |V_c|$ for some color $c$, like for red in our example. To quantify this we introduce the {\emph{excess ratio}}
\[q=1-\max_{c\in[C]}\ell_c/n_c.\]
We show that this quantity is inherent to the difficulty of the problem.
Indeed, it is impossible to achieve an approximation ratio better that $q$ with sublinear space. 

\begin{restatable}[Hardness non-monotone]{theorem}{hardness}\label{them:hardness}
	For any constant $\epsilon>0$ and $q\in[0,1]$, any algorithm for fair non-monotone submodular maximization that outputs a $(q+\epsilon)$-approximation for inputs with excess ratio above $q$, with probability at least $2/3$, requires $\Omega(n)$ memory.
\end{restatable}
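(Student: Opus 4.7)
The plan is to reduce from the one-way communication complexity of the Index problem, which requires $\Omega(N)$ bits of one-way communication. I construct a family of hard instances generalizing the example preceding the theorem: use two colors (red and blue) and $R = \Theta(n)$ groups, where group $i$ has two blue elements in ``slot~$0$'' and ``slot~$1$'' together with a single red ``nullifier'' $x_i$. Exactly one of the two blues in group $i$ is the good element $a_i$ and the other is the bad element $b_i$. Setting $A = \{a_1, \ldots, a_R\}$ and $B_i = \{b_i\}$, define
\[
f(S) = |S \cap A| + \sum_{i : x_i \notin S} |S \cap B_i|.
\]
A case analysis on the type of the marginal element shows that $f$ is non-negative, submodular, and non-monotone (the marginal of adding a nullifier can be negative). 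Set $\ell_{\text{blue}} = 0$, $\ell_{\text{red}} = \lfloor(1-q)R\rfloor$ (so the excess ratio is strictly above $q$ and tends to $q$ as $R$ grows), and cardinality $k = 2R$. The natural optimum takes all of $A$, any $\lfloor(1-q)R\rfloor$ nullifiers, and the $b_i$'s for the $\lfloor qR\rfloor$ excluded nullifiers, attaining $\OPT = R + \lfloor qR\rfloor = \Theta(R)$.

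In the reduction, Alice encodes her input $u \in \{0,1\}^R$ by placing $a_i$ into slot $u_i$ of group $i$, and streams the blues in a fixed order (slot $0$ first, then slot $1$, for each group in turn). The crucial observation is that $f(S) = |S|$ on any blue-only subset $S$, so during Alice's phase neither the element identifiers nor the value oracle reveal which blue of each group is $a_i$. Bob then appends the $R$ nullifiers. Let $T$ denote the set of blues that the algorithm's $M$-element state can still identify ($|T| \le M$); since the labeled stream does not depend on $u$, $T$ is determined solely by the algorithm's protocol. Given $T$, the maximum attainable value is at most $\max\bigl(|T|,\ |T \cap A| + \lfloor qR \rfloor\bigr)$, since $S_B \subseteq T$ and the excluded nullifier set has size $\le \lfloor qR\rfloor$.

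By Yao's minimax principle it suffices to consider deterministic algorithms and a uniformly random $u$. Partitioning $T$ by the number of slots retained in each group gives $\mathbb{E}_u[|T \cap A|] \le |T|/2 \le M/2$. For the algorithm to achieve a $(q+\epsilon)$-approximation with probability $\ge 2/3$, either $|T| \ge (q+\epsilon)\,R(1+q)$---giving $M = \Omega(R)$ directly---or $|T \cap A| \ge R\bigl[\epsilon(1+q) + q^2\bigr]$ with probability $\ge 2/3$, which combined with the above expectation bound again forces $M = \Omega(R)$. Thus $M = \Omega(R) = \Omega(n)$ for fixed $q,\epsilon$, matching the $\Omega(N)$ Index lower bound. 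The main obstacle is formalizing that the algorithm's freedom to query the oracle on its stored blues \emph{after} the reds arrive cannot recover more information about $A$ than is already implicit in $T$; this follows because $f(S)$ for $S \subseteq T \cup V_{\text{red}}$ is determined by $|T \cap A|$ and the values $|T \cap B_i|$, all of which are functions of the unchanged set $T$.
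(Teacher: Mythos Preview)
Your construction is sound and does yield the $\Omega(n)$ lower bound, but the route is genuinely different from the paper's, and the write-up contains some confusions worth flagging.

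\textbf{Comparison with the paper.} The paper does a clean one-shot reduction from INDEX using a \emph{directed cut} function on three colors. Alice streams $V_1$ (of size $n$), on which $f$ is just $b$ times cardinality; Bob streams constant-size sets $V_2,V_3$ that depend on his index $i^\star$. The optimum is $b$ if $x_{i^\star}=1$ and $a\approx qb$ otherwise, so a $(q+\epsilon)$-approximation lets Bob read off the bit. This is a communication-complexity reduction, so it gives a \emph{bit}-level memory bound and never needs to reason about what set the algorithm stores. Your approach instead extends the nullifier example to $R$ groups and argues directly about the set $T$ of blues retained after the blue phase; it is more hands-on but equally valid in the standard element-memory model.

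\textbf{Issues in your argument.}
\begin{itemize}
\item You announce a reduction from INDEX but never actually perform one: Bob has no index, and the reds you stream are the same for every~$u$. What you are really doing is a direct hard-distribution argument, and in fact a much simpler one than you wrote. Since $f(S)=|S|$ on blue-only sets, the algorithm's state after the blue phase (and hence $T$) is independent of $u$. For any feasible output with $S_{\text{blue}}\subseteq T$ we have $f(S)\le |S_{\text{blue}}|\le |T|\le M$, so success forces $M\ge (q+\epsilon)\OPT\ge(q+\epsilon)R=\Omega(n)$. No Yao, no expectation bound on $|T\cap A|$ is needed.
\item The ``$\max$'' in your displayed bound should be ``$\min$'' (both quantities are upper bounds on $f(S)$).
\item Your last sentence is mis-stated: $|T\cap A|$ and $|T\cap B_i|$ are \emph{not} ``functions of the unchanged set $T$''---they depend on $u$. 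The oracle \emph{does} reveal, during the red phase, which stored blues lie in $A$ (query $f(\{e,x_i\})$). What saves you is not an information bound but the trivial containment $S_{\text{blue}}\subseteq T$ together with $|T|\le M$.
\item Your argument assumes memory is measured in stored elements. The paper's communication-complexity proof gives the stronger bit-level statement and is robust to model variations.
\end{itemize}
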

The proof of \cref{them:hardness} is deferred to \cref{sec:hardness}. Note that in practice $q$ is nearly always large, as the size of the data is significantly larger than the size of the summary.
In what follows, we present a streaming algorithm for fair non-monotone submodular maximization, that nearly matches the above approximation lower-bound,
using only $O(k)$ memory.

\subsection{Non-monotone algorithm}\label{sec:non-monotone-alg}

\begin{wrapfigure}[10]{R}{0.5\textwidth}
\begin{minipage}{0.5\textwidth}
\vspace{-3em}
\begin{algorithm}[H]
  \caption{\OurNonMonotone \label{alg:ourNonmonotone}}
    \begin{algorithmic}[1]
        \State $S_\cA \leftarrow \emptyset, B_c \leftarrow \emptyset$ for all $c = 1,...,C$
        \For{every arriving element $e$}
            \State process $e$ with algorithm $\cA$
            \If{$e \in V_c$}
            \State $B_c \leftarrow \ReservoirSample(B_c, e)$ 
            \EndIf
        \EndFor
      	\State $S \leftarrow S_\cA$ augmented with elements in sets $B_c$
        \State \Return $S$
    \end{algorithmic}
\end{algorithm}
\end{minipage}
\end{wrapfigure}

Our non-monotone algorithm \OurNonMonotone is a variant of \Our, where we modify the way backup elements are collected. Let $\cA$ be an $\alpha$-approximation algorithm for non-monotone submodular maximization under a matroid constraint.
\OurNonMonotone runs algorithm $\cA$  to construct an extendable set $S_\cA$ that  approximately maximizes $f$.
 In parallel, our algorithm collects for every color $c$ a backup set $B_c$ of size $|B_c| = \ell_c$, by sampling without replacement $\ell_c$ elements in $V_c$ using reservoir sampling~\cite{Li1994}. 
Note that we do not need to know the value of $n_c$ to execute reservoir sampling. At the end, the solution $S_\cA$ is augmented to a feasible solution $S$ using the same simple procedure as in \cref{sec:monotone}.
The pseudocode of \OurNonMonotone  is given as \cref{alg:ourNonmonotone}. We show that adding elements from the back-up set reduces the objective value by a factor of at most $q$.

\begin{restatable}{theorem}{altnonmonotone}
Suppose $\cA$ is a streaming $\alpha$-approximate algorithm for non-monotone submodular maximization under a matroid constraint. Then, there exists a streaming algorithm for fair non-monotone submodular maximization with expected $q\alpha$ approximation ratio, and the same memory usage, oracle calls, and running time as $\cA$.
\end{restatable}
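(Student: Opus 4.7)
\medskip
\noindent\textbf{Proof plan.} The plan is to follow the structure of the proof of \cref{thm:monotone_blackbox}, but to replace its monotonicity-based augmentation bound by a probabilistic argument using a standard ``random-set'' lemma for non-negative submodular functions. First I would argue that the set $S$ returned by \OurNonMonotone is indeed feasible: the set $S_\cA$ produced by $\cA$ is extendable (the matroid constraint passed to $\cA$ is $\clF$ from \cref{lem:matroid}), so adding $\ell_c - |S_\cA \cap V_c|$ fresh elements of each color $c$ leaves the solution within all upper bounds, within the global cardinality bound, and meets every lower bound. This uses \cref{fact:extendable} together with the fact that $|B_c| = \ell_c$ guarantees that enough candidate augmentation elements are always available.

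Next I would relate $\EE[f(S_\cA)]$ to $f(\OPT)$. Since any feasible solution (in particular $\OPT$) is itself extendable, the maximum of $f$ over $\clF$ is at least $f(\OPT)$. Because $\cA$ is an $\alpha$-approximation for submodular maximization subject to the matroid $\clF$, we obtain $\EE[f(S_\cA)] \geq \alpha \cdot f(\OPT)$, where the expectation is over $\cA$'s internal randomness.

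The key step is to lose only a factor of $q$ when augmenting $S_\cA$ to the feasible set $S$. I would split the randomness into two independent sources: the internal randomness of $\cA$ (determining $S_\cA$) and the randomness of the reservoir samplers (determining the $B_c$'s). A standard property of reservoir sampling is that $B_c$ is distributed as a uniformly random size-$\ell_c$ subset of $V_c$, and this sample is independent of $\cA$. Condition on $S_\cA$ and let $A \subseteq V \setminus S_\cA$ denote the set of augmentation elements actually added to $S_\cA$. For any $e \in V_c \setminus S_\cA$ we have
\[
\Pr[e \in A \mid S_\cA] \;\le\; \Pr[e \in B_c] \;=\; \frac{\ell_c}{n_c} \;\le\; 1-q.
\]
Now I would invoke the following standard fact: if $g:2^U\to\bbRp$ is non-negative submodular and $R\subseteq U$ is a random set with $\Pr[e\in R]\le p$ for each $e\in U$ (no independence assumed), then $\EE[g(R)] \ge (1-p)\,g(\emptyset)$. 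Applied to $g(X) := f(S_\cA \cup X)$ on $U := V\setminus S_\cA$ with the random set $A$ and $p=1-q$, this yields $\EE[f(S)\mid S_\cA] \ge q\cdot f(S_\cA)$. Taking total expectation gives $\EE[f(S)] \ge q\alpha\cdot f(\OPT)$.

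Finally, for the resource guarantees, I would observe that \OurNonMonotone runs $\cA$ on the stream, performs $O(1)$ extra work per element to update the $C$ reservoir samplers, and stores $\sum_c \ell_c \le k$ backup elements on top of $\cA$'s state. So memory, per-element running time, and oracle calls all match those of $\cA$ up to lower-order terms. The main subtlety I anticipate is cleanly justifying the conditioning used in the random-set step — specifically, establishing that reservoir sampling gives a uniform subset that is independent of $\cA$'s choices despite operating on the same stream, so that the inequality $\Pr[e\in A\mid S_\cA]\le \ell_c/n_c$ is valid pointwise rather than only on average.
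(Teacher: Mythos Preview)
Your proposal is correct and follows essentially the same route as the paper: bound $\EE[f(S_\cA)]\ge\alpha f(\OPT)$ via $\cF\subseteq\clF$, then apply the Buchbinder--Feldman random-set lemma (\cref{lem:rndset-approx} in the paper) to $g(X)=f(S_\cA\cup X)$ with the observation that each element lands in the augmenting set with probability at most $\max_c \ell_c/n_c = 1-q$. Your treatment is in fact slightly more careful than the paper's own proof --- you explicitly condition on $S_\cA$ before invoking the lemma and flag the independence between the reservoir coins and $\cA$'s coins, whereas the paper applies the lemma directly to $B=S\setminus S_\cA$ without spelling out this conditioning.
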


The proof is provided in \cref{algo:nonmonotoneproofs}. Combining this with the state of the art $1/5.82$-approximation algorithm of Feldman, et al.~\cite{Feldman2018} (restated in \cref{sec:matroid_algorithms} for completeness) yields the following.

\begin{restatable}[Streaming non-monotone]{theorem}{mainnonmotone}
	\label{them:nonmonotone}
	There exists a streaming algorithm for fair non-monotone submodular maximization that achieves $\nicefrac{q}{5.82}$-approximation in expectation, using $O(k)$ memory. This algorithm uses $O(k)$  time and $O(k)$ oracle calls per element.
\end{restatable}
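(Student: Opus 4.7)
The plan is to instantiate the black-box reduction in the preceding theorem (realized by \OurNonMonotone) with a concrete streaming matroid algorithm. The natural choice is the algorithm of Feldman, Karbasi, and Kazemi~\cite{Feldman2018} for streaming non-monotone submodular maximization under a matroid constraint, which, when specialized to a single matroid ($p = 1$), achieves $\alpha = 1/(3+2\sqrt{2}) \approx 1/5.82$ approximation, uses $O(k)$ memory, and requires $O(k)$ time and $O(k)$ oracle calls per element. I would run this algorithm on the matroid $\clF$ of extendable subsets, which is a matroid by \cref{lem:matroid} and admits an efficient membership oracle via \cref{fact:extendable}, so that the output $S_\cA$ is automatically extendable.

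Substituting this $\cA$ into the reduction immediately yields the target expected approximation ratio $q \alpha = q/5.82$. To conclude the proof I only need to verify that the overhead of \OurNonMonotone beyond executing $\cA$ does not inflate the resource bounds: the backup sets satisfy $\sum_c |B_c| = \sum_c \ell_c \le k$, contributing $O(k)$ to memory; reservoir sampling adds $O(1)$ work per arriving element on top of the call to $\cA$; and the one-off augmentation at the end of the stream touches at most $O(k)$ elements and performs no additional oracle queries. All three resource bounds are therefore inherited from $\cA$ up to constants.

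The substantive content I am invoking is the bound $\EE[f(S)] \ge q \cdot \EE[f(S_\cA)]$ guaranteed by the reduction. The argument there combines (i) the approximation guarantee $\EE[f(S_\cA)] \ge \alpha \, f(\OPT)$, which holds because $\OPT \in \cF \subseteq \clF$, with (ii) the standard non-negative submodular sampling lemma: for any non-negative submodular $g$ and random subset $T$ of the ground set in which each element appears with probability at most $p$, $\EE[g(T)] \ge (1-p)\, g(\emptyset)$. Applied to $g(X) = f(S_\cA \cup X)$ (conditionally on $S_\cA$) with $T = S \setminus S_\cA$ and $p = \max_c \ell_c/n_c = 1-q$, this delivers the desired factor of $q$.

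The main obstacle, which the reduction's own proof handles, is verifying the inclusion-probability bound on the augmentation set $T$. Because $T$'s cardinality is data-dependent—we only add $\max(0, \ell_c - |S_\cA \cap V_c|)$ elements of color $c$—and $B_c$ may overlap with $S_\cA$, one must argue that the randomness of reservoir sampling is preserved through augmentation. The key observation is that reservoir sampling returns a uniformly random subset of $V_c$ of size $\ell_c$, so each $v \in V_c$ lies in $B_c$ with probability exactly $\ell_c/n_c$; restricting attention to the elements actually chosen to augment $S_\cA$ can only shrink this probability, so $\Pr[v \in T] \le \ell_c/n_c \le 1-q$ for every $v \in V_c \setminus S_\cA$, which is exactly what the sampling lemma requires. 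Taking expectations over $S_\cA$ and chaining the two inequalities yields $\EE[f(S)] \ge q \alpha \, f(\OPT) = (q/5.82) f(\OPT)$, completing the proof.
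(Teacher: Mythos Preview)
Your proposal is correct and follows essentially the same route as the paper: instantiate the black-box reduction (\OurNonMonotone) with the $1/(3+2\sqrt{2})$-approximate streaming matroid algorithm of Feldman, Karbasi, and Kazemi, run on the extendability matroid $\clF$, and read off the $q/5.82$ ratio together with the $O(k)$ resource bounds. If anything, your treatment of the inclusion-probability bound for the augmentation set (arguing conditionally on $S_\cA$ and noting that membership in $T$ can only be a subset of membership in $B_c$) is more explicit than the paper's own appendix proof, which simply asserts the bound and invokes \cref{lem:rndset-approx}.
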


\section{Empirical Evaluation}\label{sec:exp}

In this section, we empirically validate our results and address the question: \textit{What is the price of fairness?} To this end, we compare our approach against several baselines on four datasets. We measure: {(1)~Objective values}. {(2)~Violation of fairness constraints}: Given a set $S$, we define $\err(S) = \sum_{c \in [C]} \max\{|S \cap V_c| - u_c, \ell_c - |S \cap V_c|, 0\}$. A single term in this sum quantifies by how many elements $S$ violates the lower or upper bound. Note that $\err(S)$ is in the range $[0, 2 k]$.
{(3)~Number of oracle calls}, as is standard in the field to measure the efficiency of algorithms.

We compare the following algorithms:
\begin{itemize}
	\item \textbf{\Our-CK}: monotone, $\cA =$ Chakrabarti-Kale~\cite{chakrabarti2014submodular} (\cref{thm:main-monotone-streaming,app:monotone_case}),
	\item \textbf{\Our-FKK}: monotone, $\cA =$ Feldman et al.~\cite{Feldman2018} (\cref{app:non_monotone_case}),
	\item \textbf{\OurNonMonotone-FKK}: non-monotone, $\cA =$ Feldman et al.~\cite{Feldman2018} (\cref{them:nonmonotone}),
	\item \textbf{\MatroidCons}: \cite{Feldman2018} (\cref{app:non_monotone_case}) applied to matroid defining upper bounds only ($u_c$ and $k$),
	\item \textbf{\FairGreedy}: monotone; where data size allows; see \cref{sec:fairgreedy},
	\item \textbf{\Greedy}: monotone; when data size allows; no fairness constraints, only $k$,
	\item \textbf{\Sieve}: Badanidiyuru et al.~\cite{Badanidiyuru2014}; monotone; no fairness constraints, only $k$,
	\item \textbf{\Random}:
	maintain random sample of $k$ elements; no fairness constraints,
	\item \textbf{\FairRandom}: maintain random feasible (fair) solution. 
\end{itemize}

We now describe our experiments.
We report the results in \cref{fig:results}, and discuss them in \cref{sec:results}.
The code is available at \url{https://github.com/google-research/google-research/tree/master/fair_submodular_maximization_2020}.

\subsection{Maximum coverage}
\label{sec:maximum-coverage}
Social influence maximization~\cite{kempe2003maximizing} and network marketing~\cite{li2011labeled} are some of the prominent applications of the \emph{maximum coverage} problem. The goal of this problem is to select a fixed number of nodes that maximize the coverage of a given network.
Given a graph $G = (V, E)$, let $N(v) = \{u : (v, u) \in E\}$ denote the neighbors of $v$. Then the coverage of $S \subseteq V$, denoted by $f(S)$, is defined as the monotone submodular function
$
	f(S) = \left|\bigcup_{v \in S} N(v)\right|.
$
We perform experiments on the Pokec social network~\cite{snapnets}. This network consists of $1\,632\,803$ nodes, representing users, and $30\,622\,564$ edges, representing friendships. Each user profile contains attributes such as age, height and weight; these can take value ``null''. We impose fairness constraints with respect to 
{(i)}~age and {(ii)}~body mass index (BMI).

{(i)}~
We split ages into ranges $[1, 10], [11, 17], [18, 25], [26, 35], [36, 45], [46+]$ and consider each range as one color. We create another color for records with ``null'' age (around $30\%$). Then for every color~$c$ we set $\ell_c = \max\{0, |V_c| / n - 0.05\} \cdot k$ and $u_c = \min\{1, |V_c| / n + 0.05\} \cdot k$,
except for the null color, where we set $\ell_c = 0$. The results are shown in \cref{fig:pokec-age-obj}, \ref{fig:pokec-age-err}, and \ref{fig:pokec-age-oc}.  

(ii)~BMI is computed as the ratio between weight (in kg) and height (in m) squared. Around $60\%$ of profiles do not have set height or weight. We discard all such profiles, as well as profiles with clearly fake data (less than $2\%$ of profiles).
The resulting graph consists of $582\,319$ nodes and $5\,834\,695$ edges. The profiles are colored with respect to four standard BMI categories (underweight, normal weight, overweight and obese). Lower- and upper-bound fairness constraints are set again to be within $5\%$ of their respective frequencies. The results are shown in \cref{fig:pokec-BMI-obj}, \ref{fig:pokec-BMI-err}, and \ref{fig:pokec-BMI-oc}.  

\subsection{Movie recommendation}

We use the Movielens 1M dataset~\cite{harper2016movielens},
which contains $\sim$1M ratings for $3\,900$ movies by $6\,040$ users,
to develop a movie recommendation system.
We follow the experimental setup of prior work~\cite{mitrovic2017streaming,norouzi2018beyond}:
we compute a low-rank completion of the user-movie rating matrix~\cite{troyanskaya2001missing},
which gives rise to feature vectors $w_u \in \bbR^{20}$ for each user $u$
and $v_m \in \bbR^{20}$ for each movie $m$.
Then $w_u^\top v_m$ approximates the rating of $m$ by $u$.
The (monotone submodular) utility function for a collection $S$ of movies
personalized for user $u$
is defined as
\[ f_{u}(S) = 
\alpha \cdot \sum_{m' \in M}
\max\rb{
\max_{m \in S} \rb{v_m^\top v_{m'}},
0
}
+
(1 - \alpha)
\cdot
\sum_{m \in S} w_u^\top v_m
. \]

The first term optimizes coverage of the space of all movies
(enhancing diversity)~\cite{lindgren2016leveraging},
and the second term sums up user-dependent movie scores;
$\alpha$ controls the trade-off between the two terms.\\
In our experiment we recommend
a collection of movies for $\alpha = 0.85$
and $k$-values up to 100.
Each movie in the database is assigned to one of 18 genres $c$; our fairness constraints mandate
a representation of genres in $S$ similar to that in the entire dataset.
More precisely, we set
$\ell_c = \floor*{0.8 \frac{|V_c|}{|V|} k}$
and
$u_c = \ceil*{1.4 \frac{|V_c|}{|V|} k}$.
The results are shown in \cref{fig:movies-obj}, \ref{fig:movies-err}, and \ref{fig:movies-oc}.

\subsection{Census DPP-based summarization}

A common reliable method for data summarization is to use a Determinantal Point Process (DPP) to assign a diversity score to each subset, and choose the subset that maximizes this score. 
A DPP is a probability measure over subsets,
defined for every $S \subseteq V$,
as $P(S) = \frac{\det(L_S)}{\det(I + L)}$, where $L$ is an $n \times n$ positive semi-definite kernel matrix,  $L_S$ is the $|S| \times |S|$ principal submatrix of $L$ indexed by $S$, and $I$ is the identity matrix. To find the most diverse representative subset, we need to maximize the non-montone submodular function $f(S) = \log\det(L_S)$ \cite{Kulesza2012}. To ensure non-negativity (on non-empty sets), we normalize $f$ by a constant. \\
We use the
Census Income dataset \cite{Dua:2019} which consists of $45\,222$ records extracted from the 1994 Census database, with 14 attributes such as age, race, gender, education, and whether the income is above or below 50K USD.
We follow the experimental setup of \cite{celis2018DPP} to generate feature vectors of dimension 992 for 5000 randomly chosen records.\footnote{Code available at: \url{https://github.com/DamianStraszak/FairDiverseDPPSampling}.} We select fair representative summaries with respect to race, requiring that each of the race categories provided in the dataset (White, Black, Asian-Pac-Islander, Amer-Indian-Eskimo, and Other) have a similar representation in $S$ as in the entire dataset. Accordingly, 
we set
$\ell_c = \floor*{0.9 \frac{T}{n} k}$
and
$u_c = \ceil*{1.1 \frac{T}{n} k}$, and
 vary $k$ between $50-600$. The results are shown in \cref{fig:census-obj}, \ref{fig:census-err}, and \ref{fig:census-oc}.
\subsection{Exemplar-based clustering}
\label{sec:exemplar-based-clustering}
We consider a dataset
containing one record for each phone call in a marketing campaign ran by a Portuguese banking institution~\cite{DBLP:journals/dss/MoroCR14}.
We aim to find a representative subset of calls in order to assess the quality of service.
We choose numeric attributes such as client age, gender, account balance,
call date, and duration, to represent each record in the Euclidean space.
We require the chosen subset to have clients in a wide range of ages. We divide the records into six groups according to age: $[0,29], [30,39], [40,49], [50,59], [60,69], [70+]$;  the numbers of records in each range are respectively: $5\,273, 18\,089, 11\,655, 8\,410, 1\,230, 554$. We set our  bounds so as to ensure that each group comprises $10-20\%$ of the subset.
Then
we maximize the following monotone submodular function~\cite{krause2010budgeted}, where $R$ denotes all records:
\[f(S) = C - \sum_{r \in R} \min_{e \in S} d(r,e) \quad \text{ where } \quad d(x, y) = \| x-y \|_2^2 \,. \]
We let $f(\emptyset) = 0$ and $C$ be $|V|$ times the maximum distance.\footnote{Note that $C$ is added to ensure that all values are non-negative. Any $C$ with this property would be suitable.} The results are shown in \cref{fig:bank-obj}, \ref{fig:bank-err}, and \ref{fig:bank-oc}, where the clustering cost refers to $C - f(S)$.

\begin{figure}\vspace{-30pt}
\begin{subfigure}{.33\textwidth}
  \centering
\includegraphics[trim=20 200 40 200, clip, scale=.25]{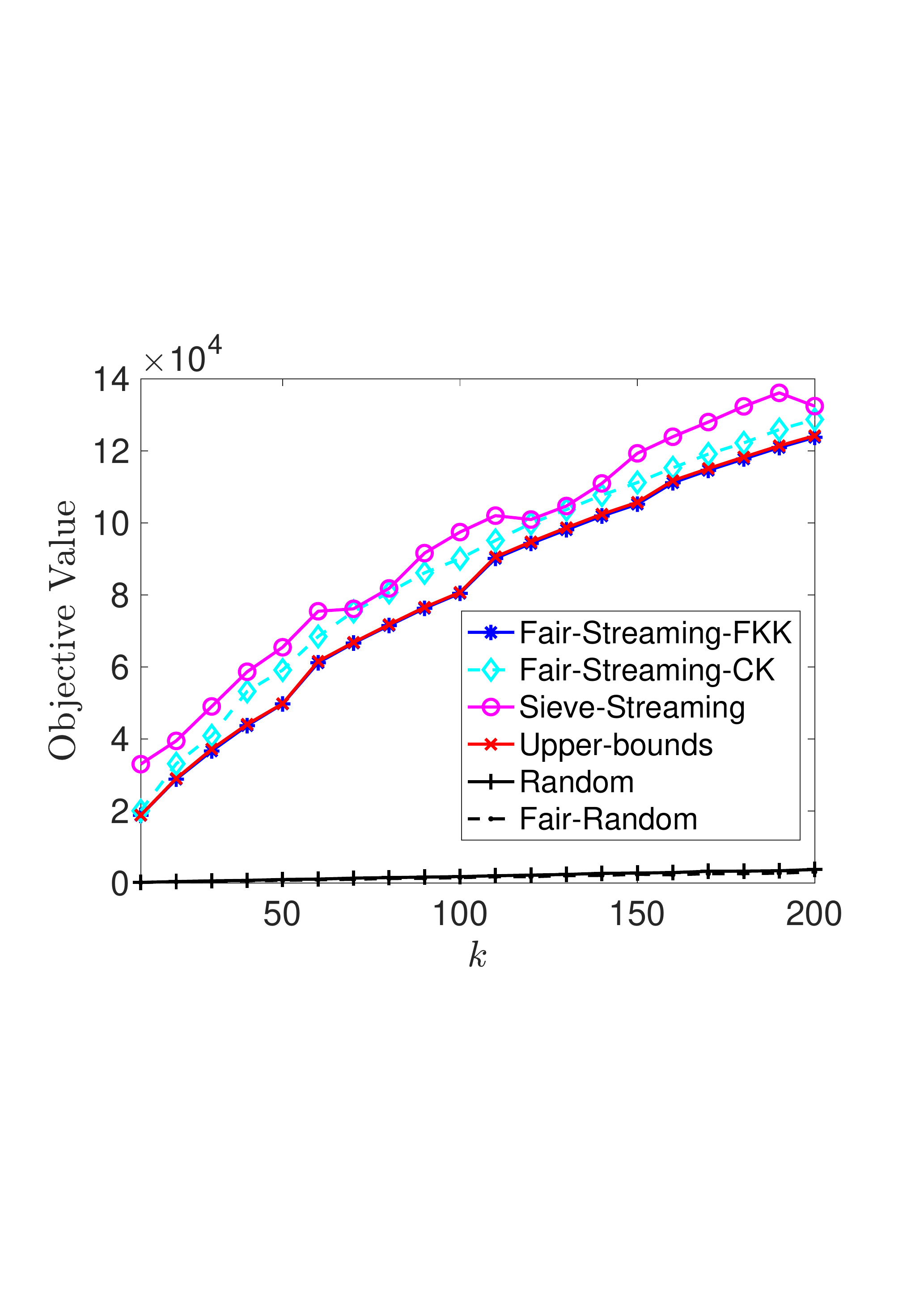} 
\caption{\label{fig:pokec-age-obj} Pokec-age}
\end{subfigure}
\begin{subfigure}{.33\textwidth}
  \centering
\includegraphics[trim=20 200 40 200, clip, scale=.25]{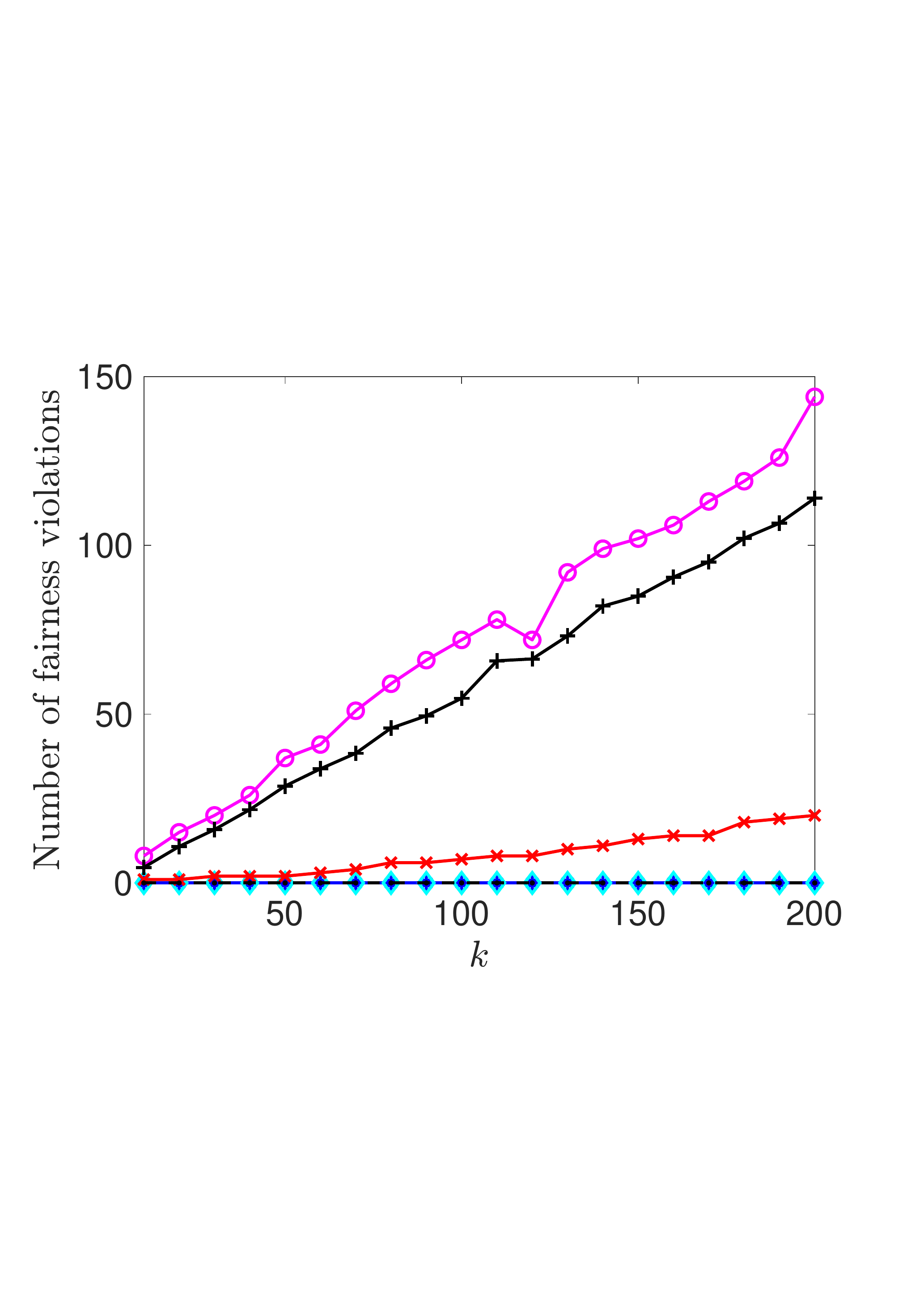}
\caption{\label{fig:pokec-age-err} Pokec-age}
\end{subfigure}
\begin{subfigure}{.33\textwidth}
  \centering
\includegraphics[trim=40 200 40 200, clip, scale=.25]{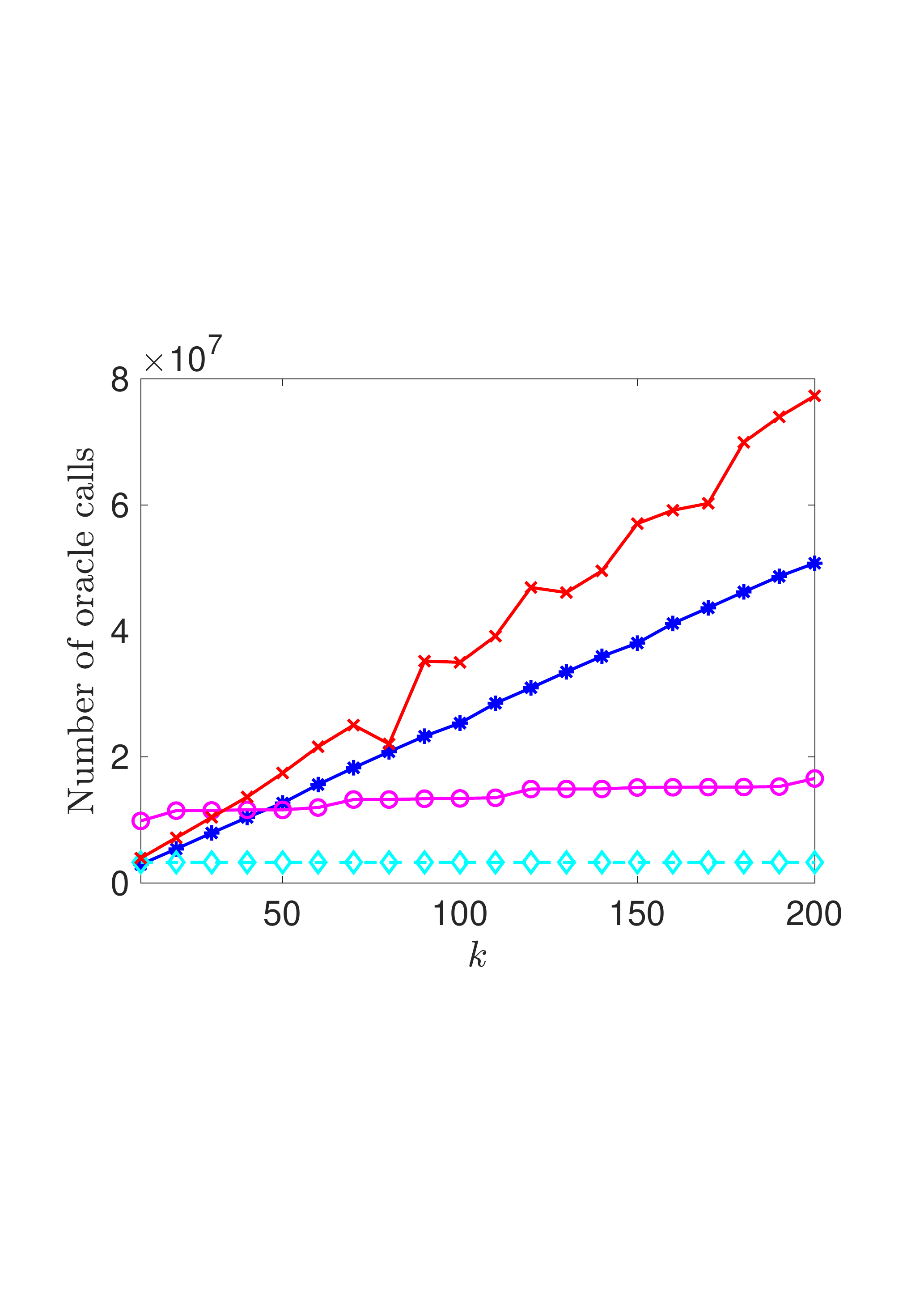} 
\caption{\label{fig:pokec-age-oc} Pokec-age}
\end{subfigure}\\

\begin{subfigure}{.33\textwidth}
  \centering
\includegraphics[trim=20 200 50 200, clip, scale=.25]{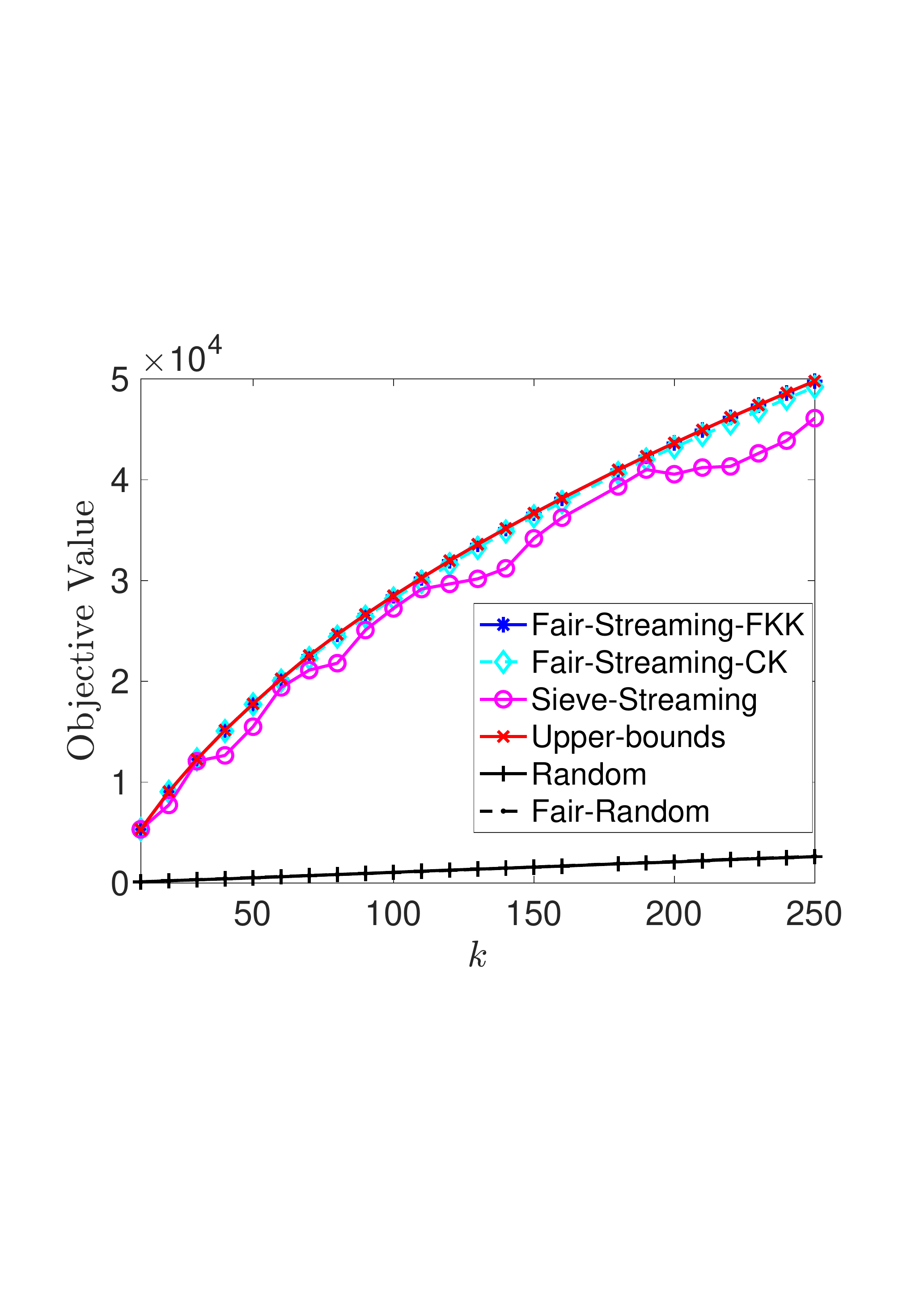}
\caption{\label{fig:pokec-BMI-obj} Pokec-BMI}
\end{subfigure}
\begin{subfigure}{.33\textwidth}
  \centering
\includegraphics[trim=20 200 50 200, clip, scale=.25]{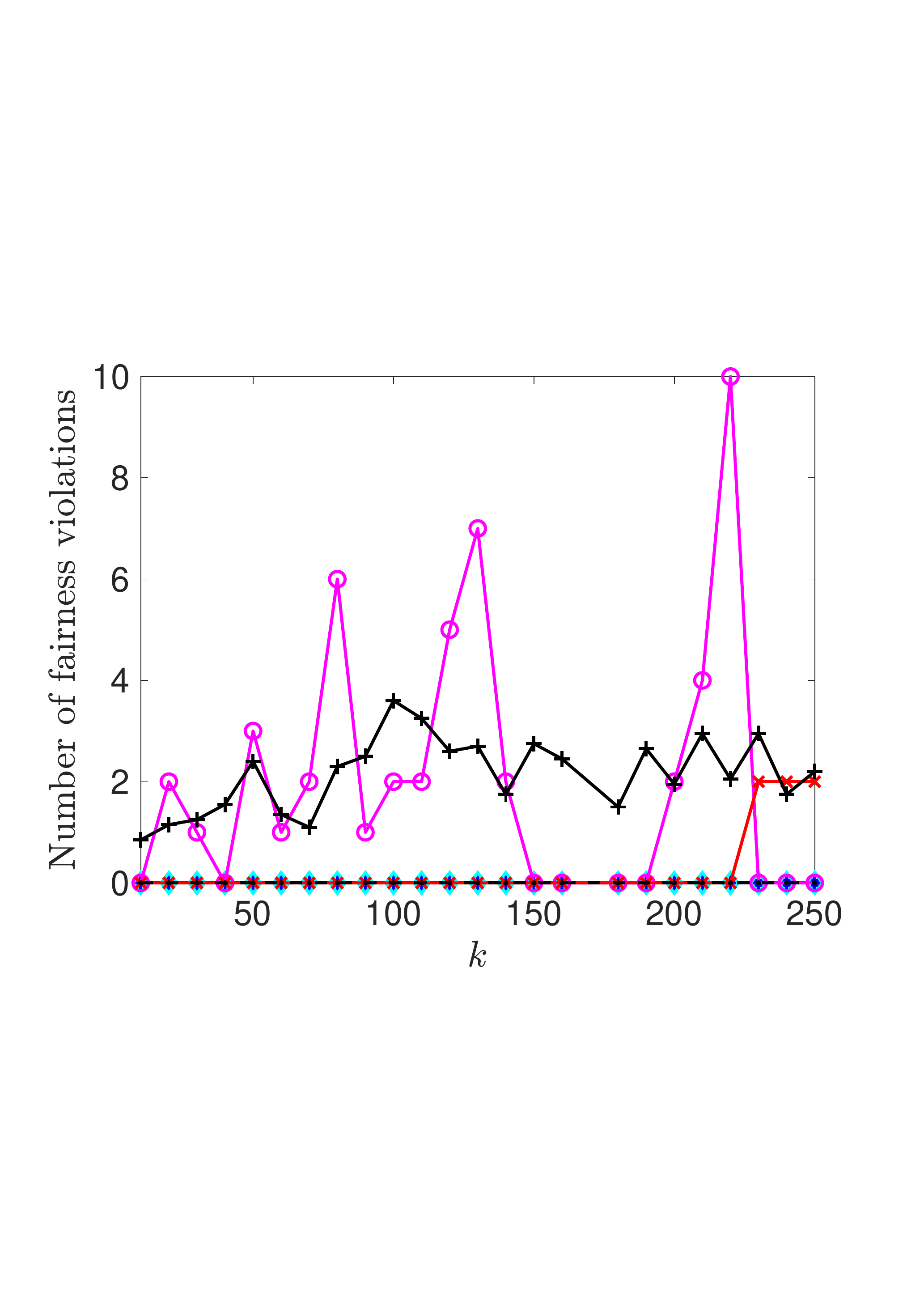}
\caption{\label{fig:pokec-BMI-err} Pokec-BMI} 
\end{subfigure}
\begin{subfigure}{.33\textwidth}
  \centering
\includegraphics[trim=20 200 50 200, clip, scale=.25]{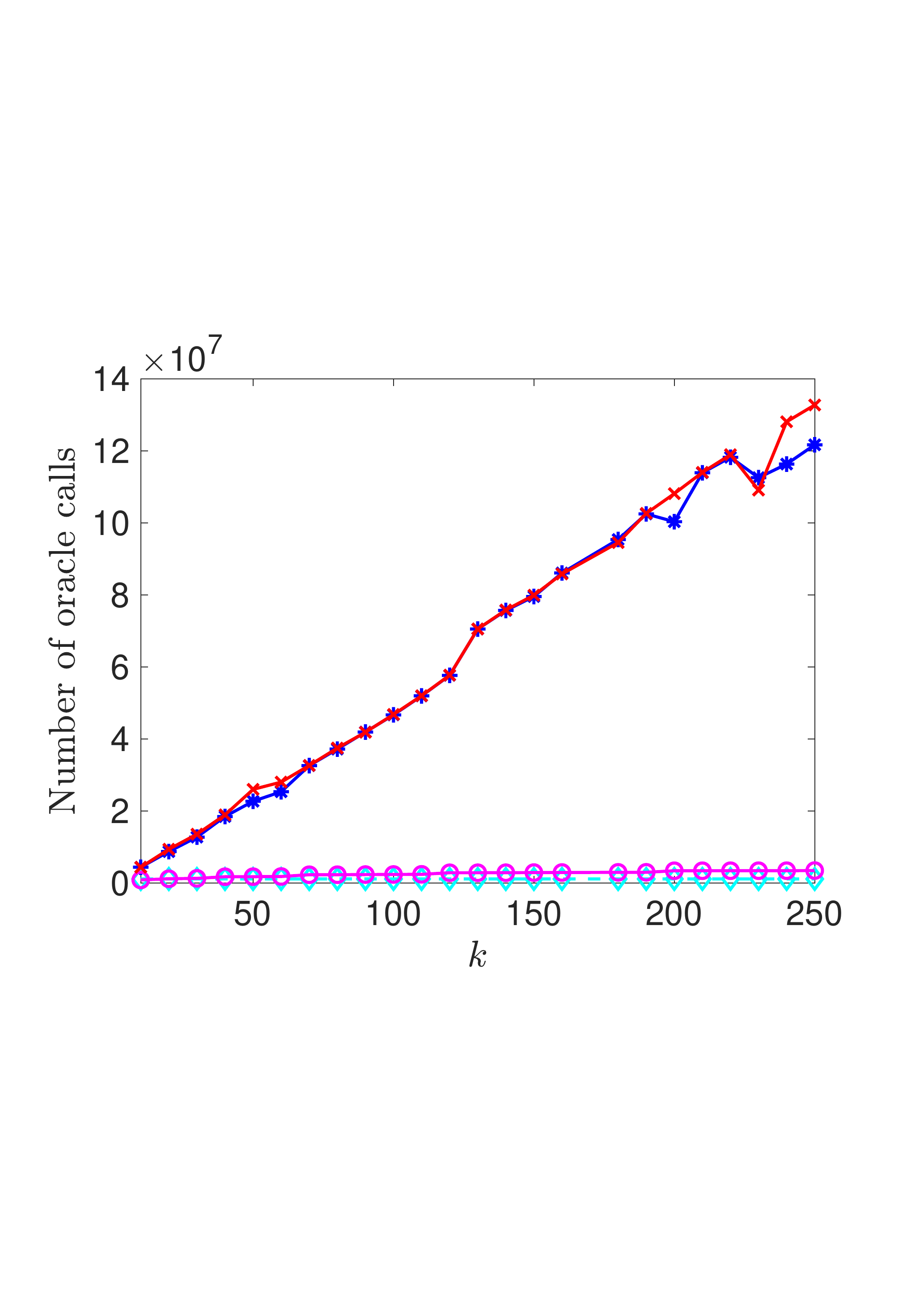}
\caption{\label{fig:pokec-BMI-oc} Pokec-BMI}
\end{subfigure}\\

\begin{subfigure}{.33\textwidth}
  \centering
\includegraphics[trim=20 200 50 200, clip, scale=.25]{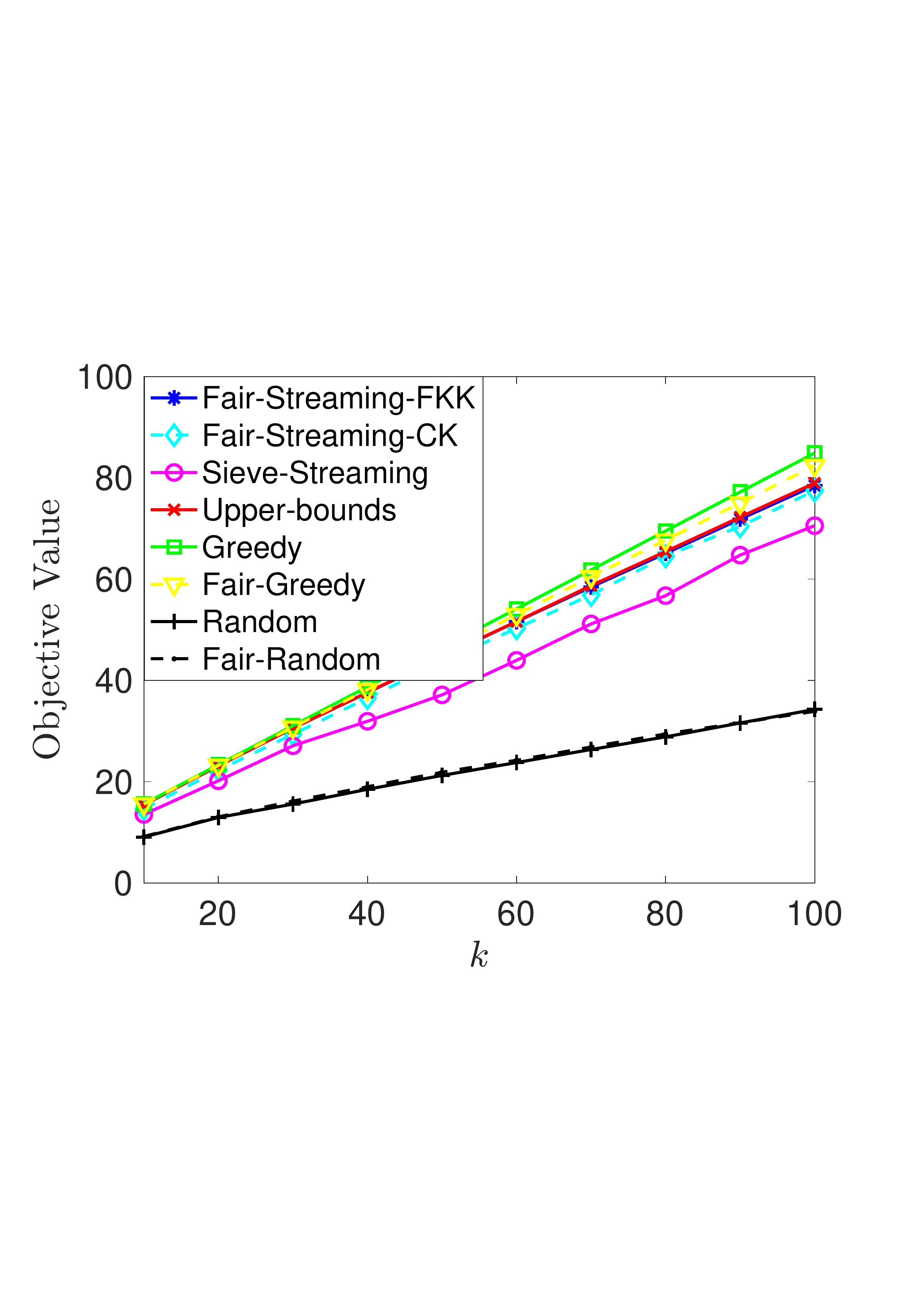} 
\caption{\label{fig:movies-obj} Movielens-genre}
\end{subfigure}
\begin{subfigure}{.33\textwidth}
  \centering
 \includegraphics[trim=20 200 40 200, clip, scale=.25]{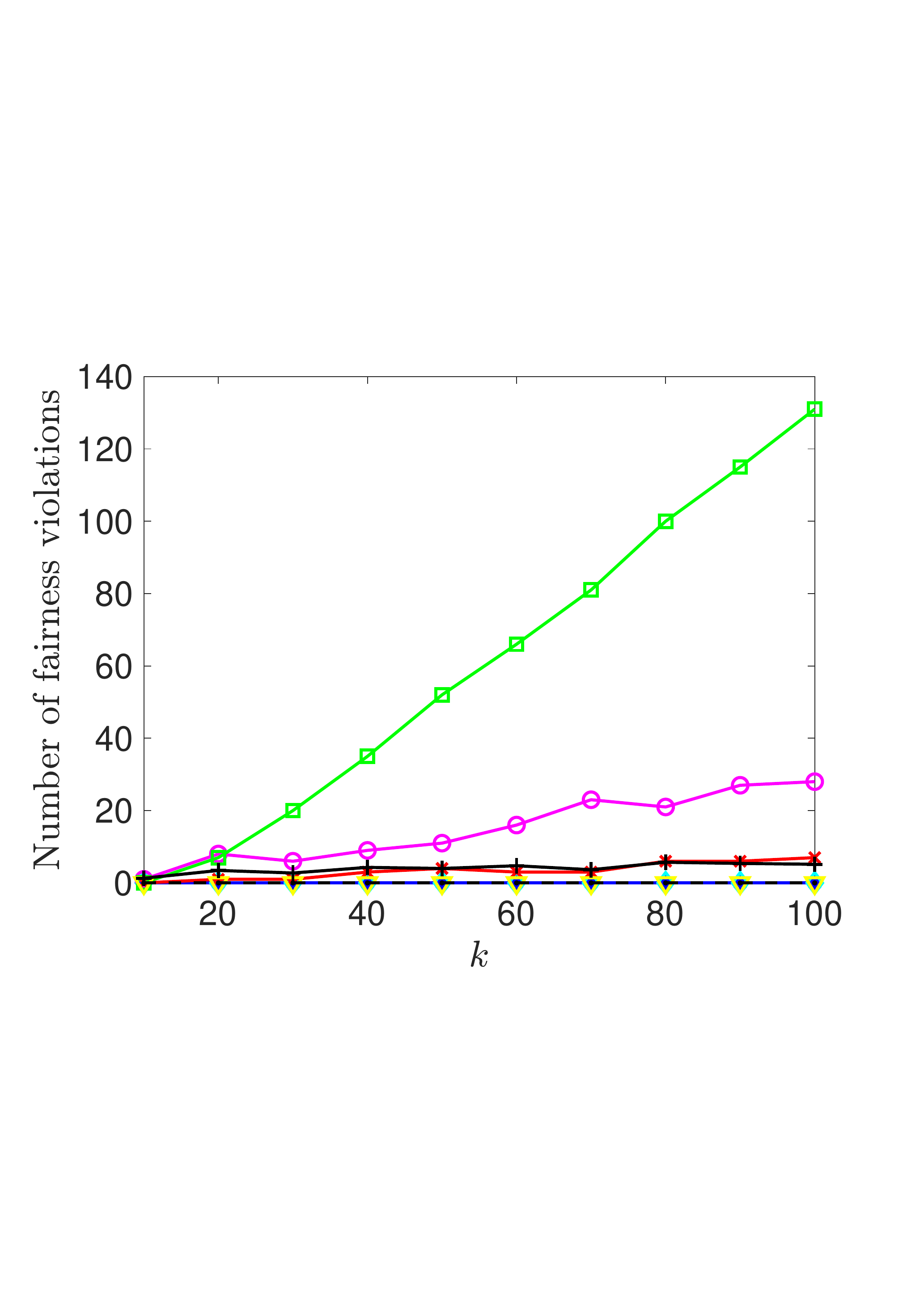} \caption{\label{fig:movies-err} Movielens-genre}
\end{subfigure}
\begin{subfigure}{.33\textwidth}
  \centering
 \includegraphics[trim=40 200 40 200, clip, scale=.25]{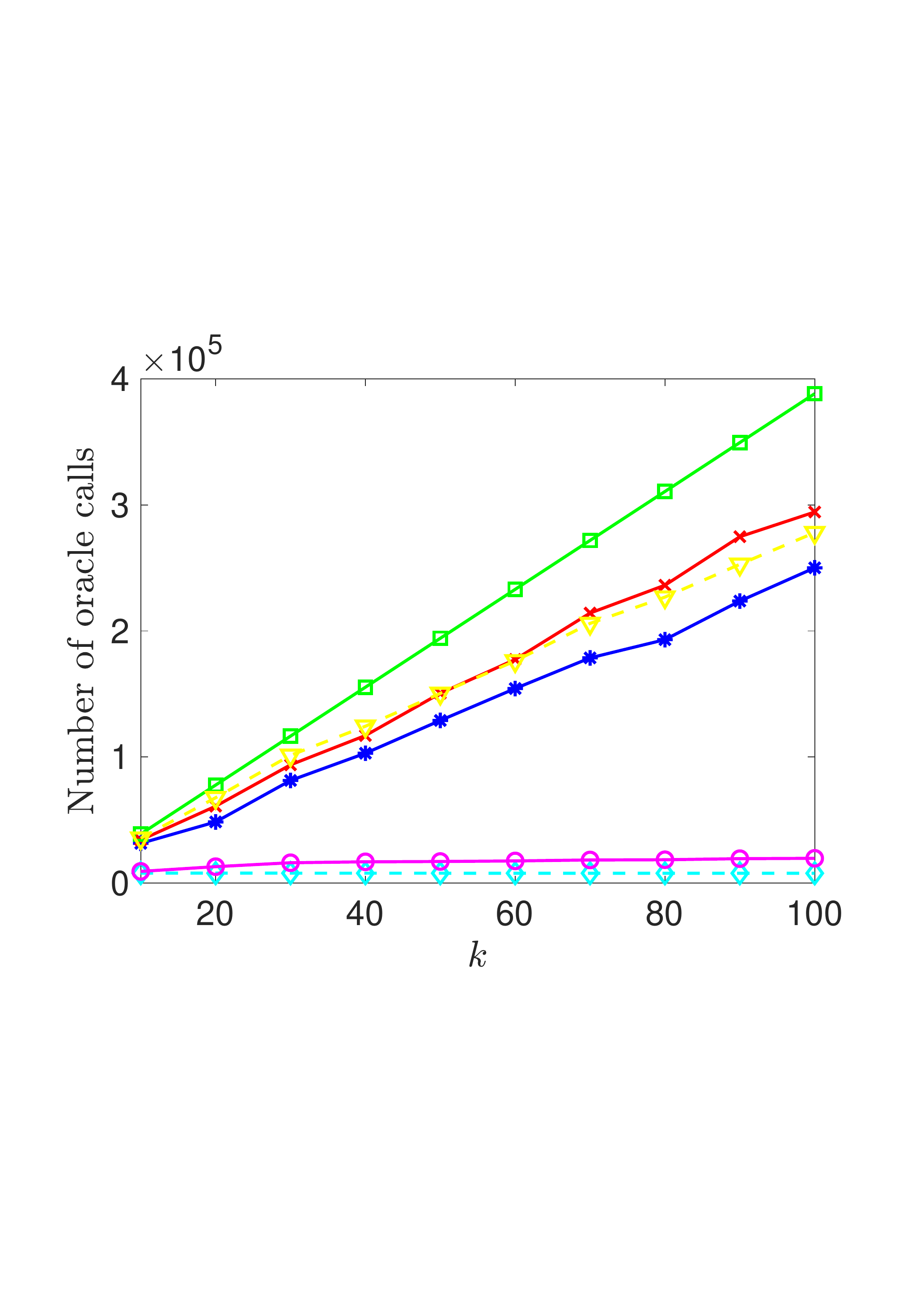} \caption{\label{fig:movies-oc}  Movielens-genre}
\end{subfigure}\\

\begin{subfigure}{.33\textwidth}
  \centering
\includegraphics[trim=20 200 40 200, clip, scale=.25]{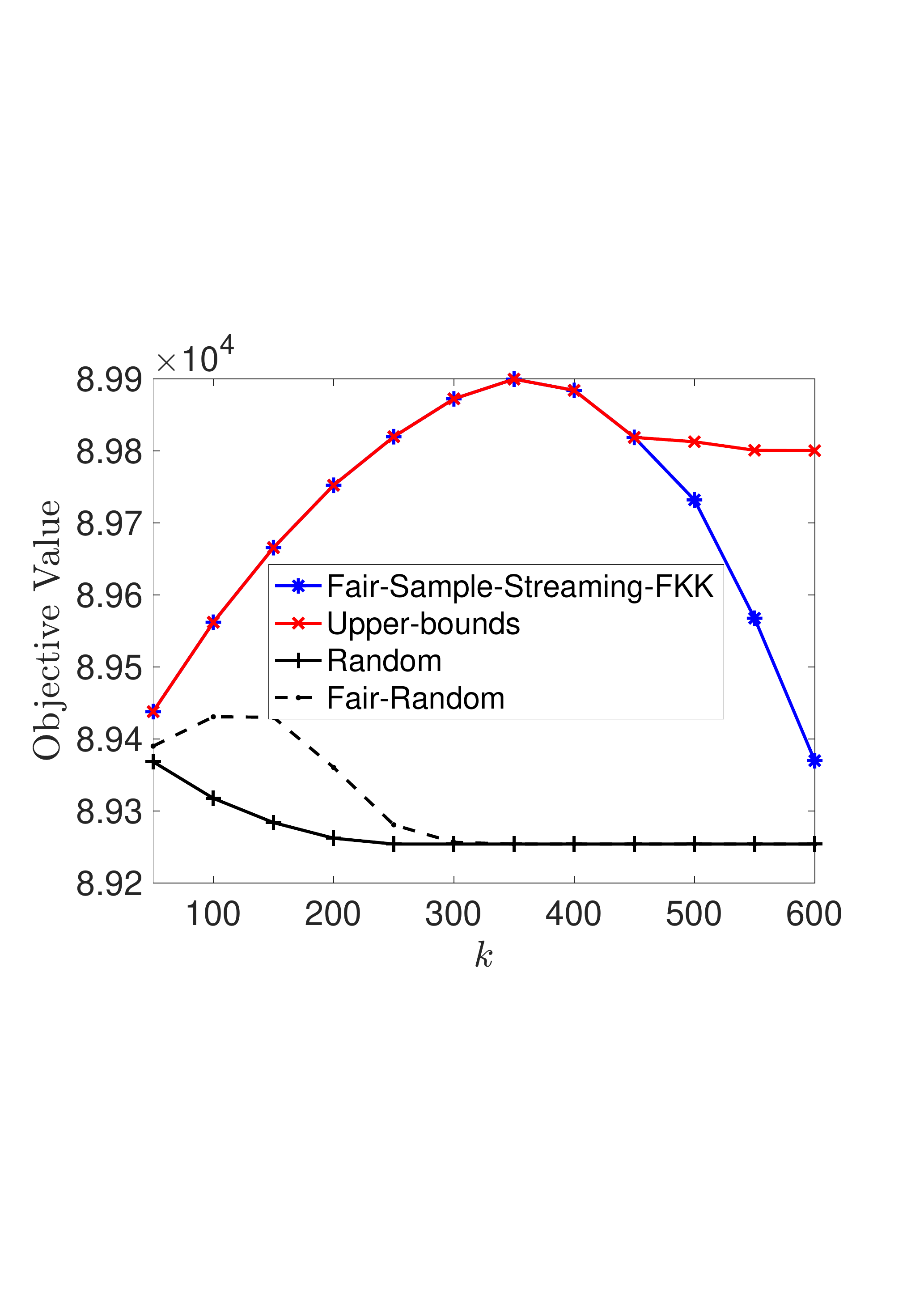} 
\caption{\label{fig:census-obj} Census-race}
\end{subfigure}
\begin{subfigure}{.33\textwidth}
  \centering
 \includegraphics[trim=20 200 40 200, clip, scale=.25]{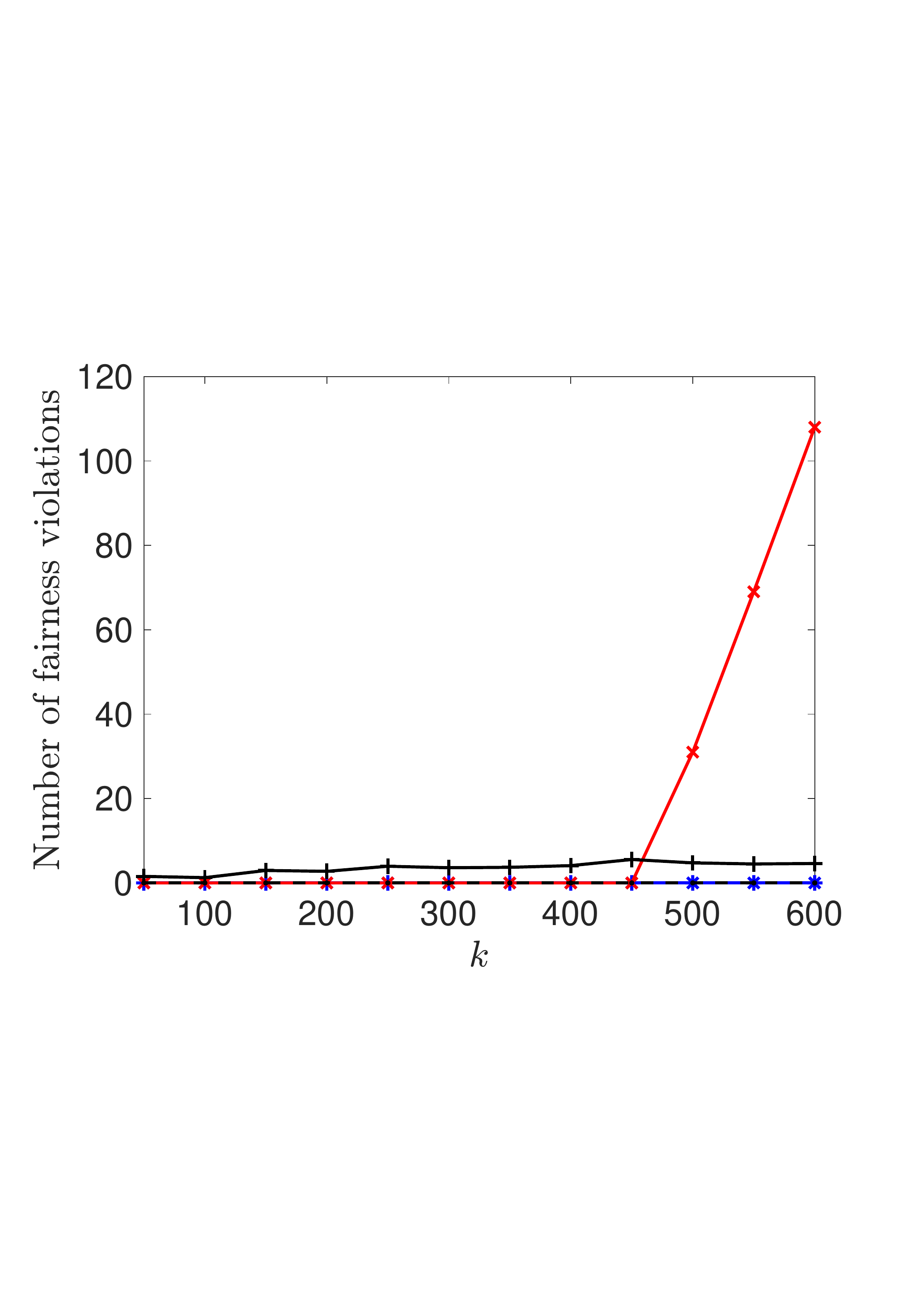} 
\caption{\label{fig:census-err} Census-race}
\end{subfigure}
\begin{subfigure}{.33\textwidth}
  \centering
\includegraphics[trim=25 200 40 200, clip, scale=.25]{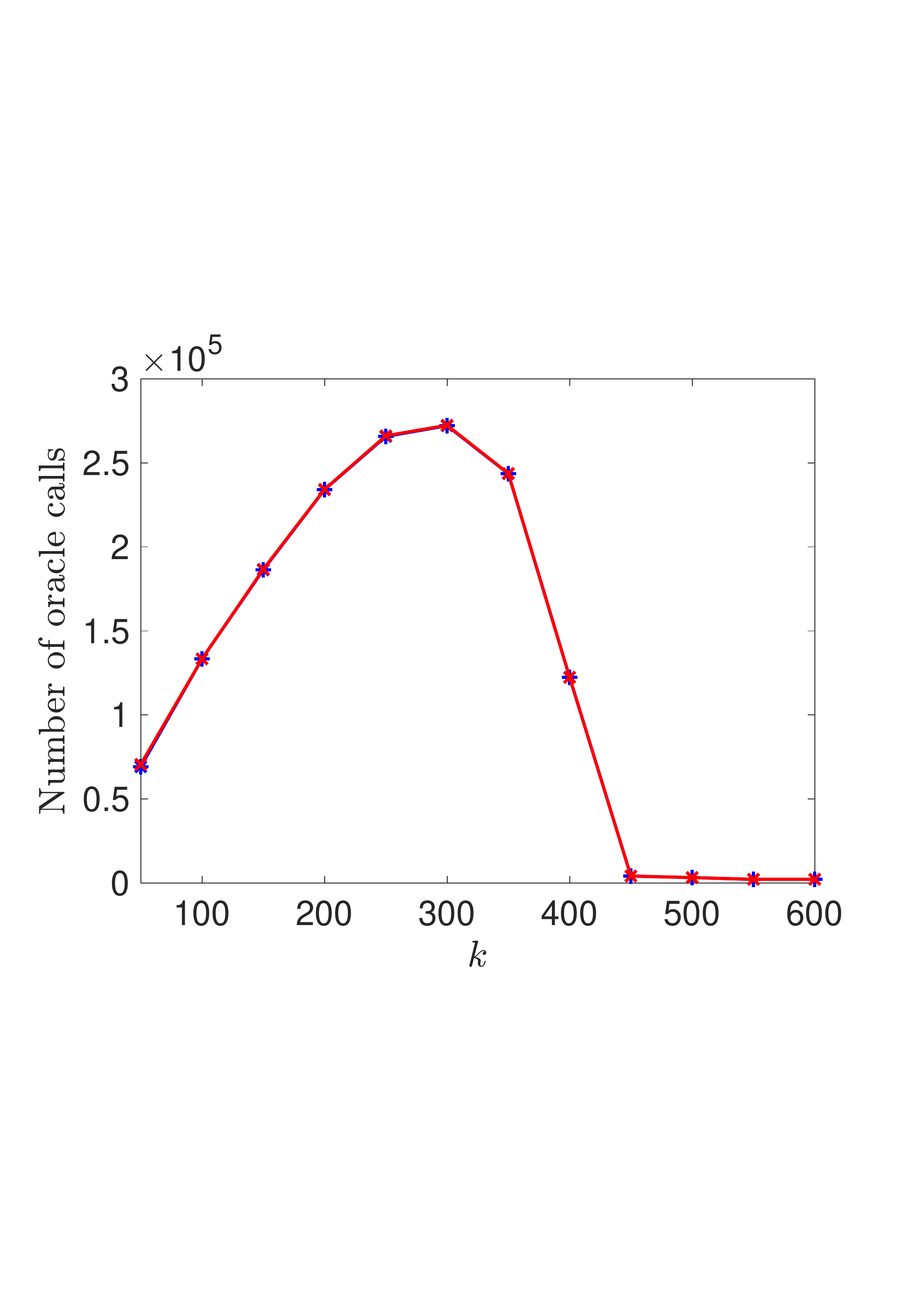}
\caption{\label{fig:census-oc} Census-race}
\end{subfigure}\\

\begin{subfigure}{.33\textwidth}
  \centering
\includegraphics[trim=20 200 50 200, clip, scale=.25]{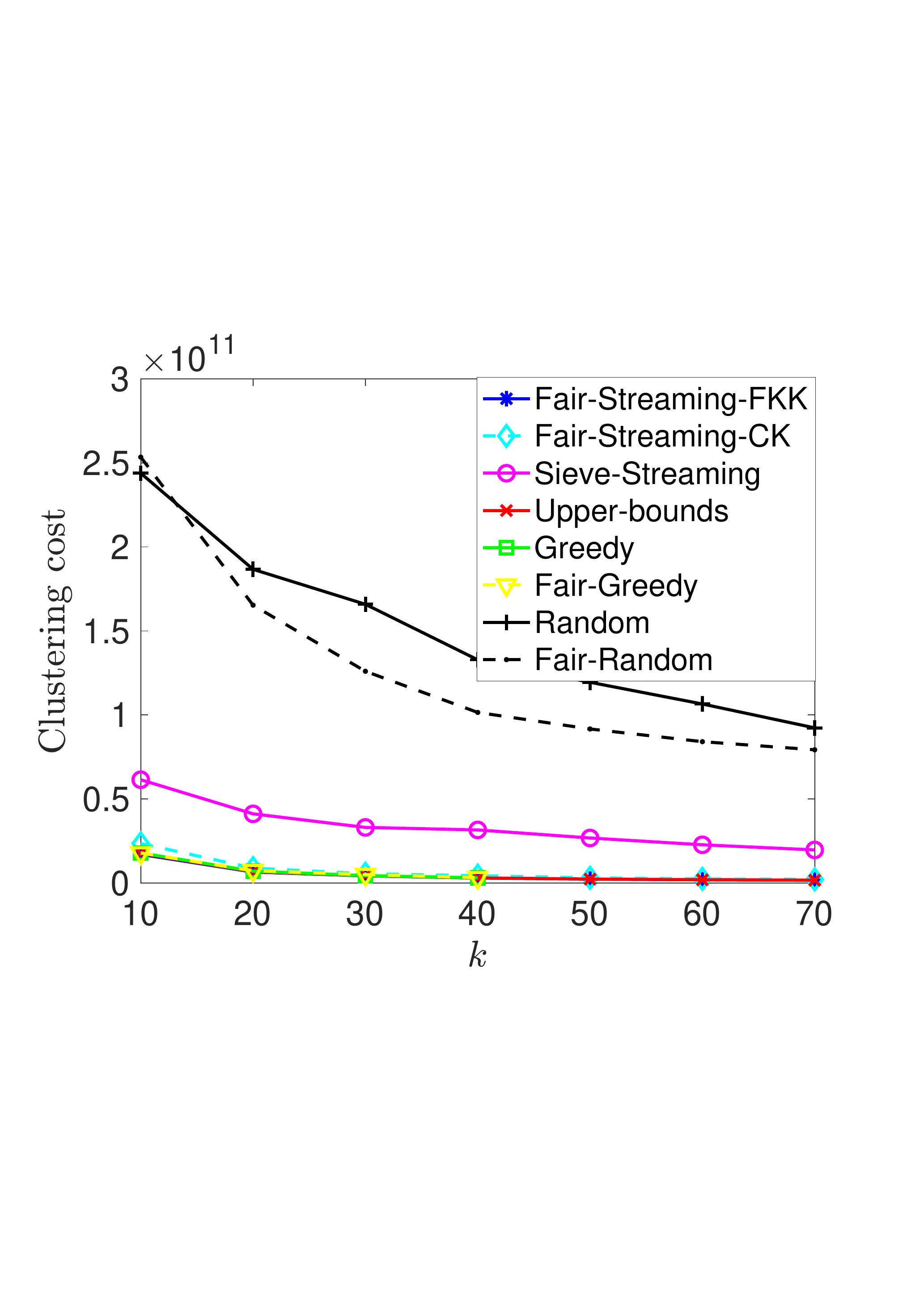} 
\caption{\label{fig:bank-obj} Bank-age}
\end{subfigure}
\begin{subfigure}{.33\textwidth}
  \centering
\includegraphics[trim=20 200 50 200, clip, scale=.25]{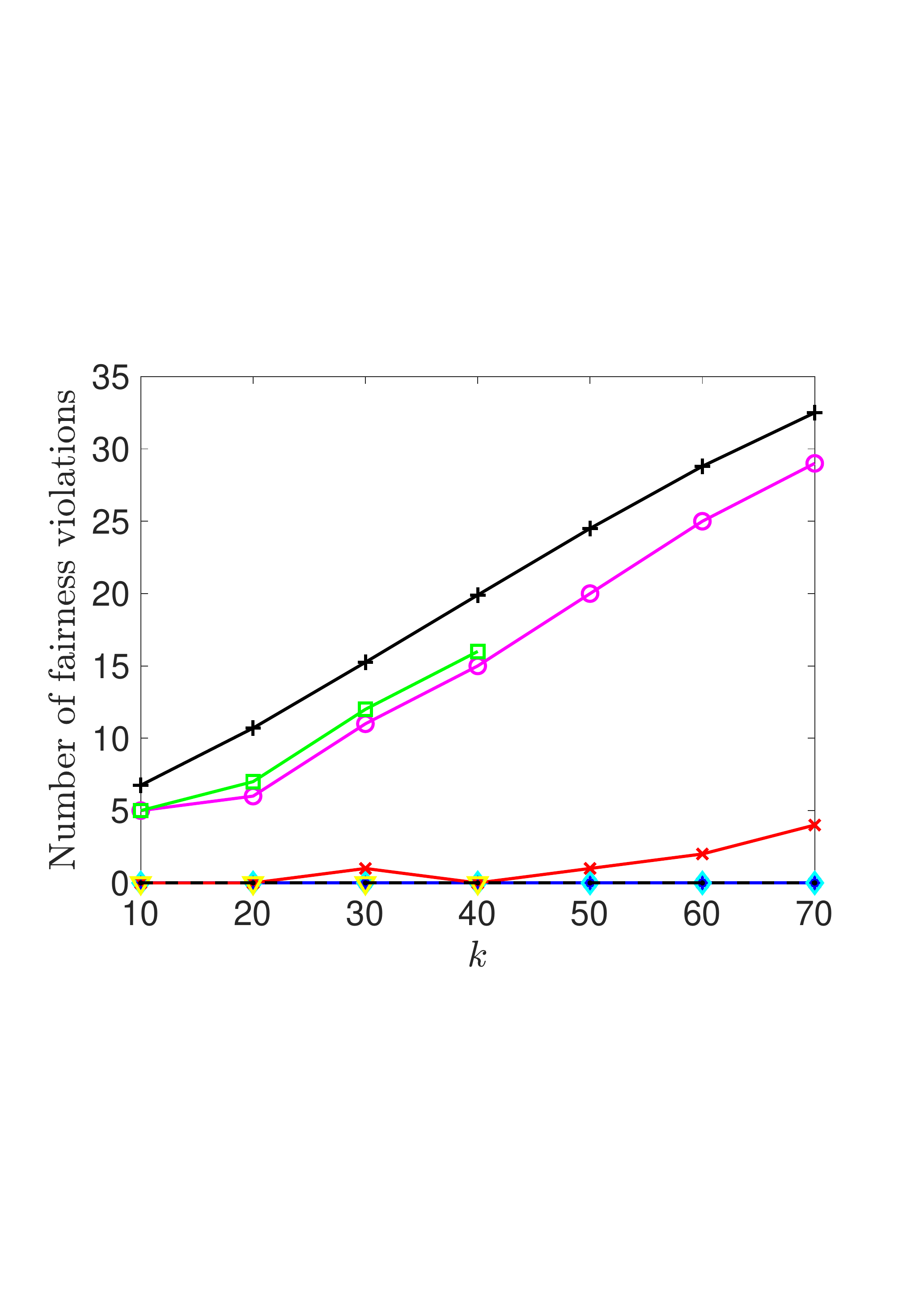}
\caption{\label{fig:bank-err} Bank-age}
\end{subfigure}
\begin{subfigure}{.33\textwidth}
  \centering
\includegraphics[trim=20 200 50 200, clip, scale=.25]{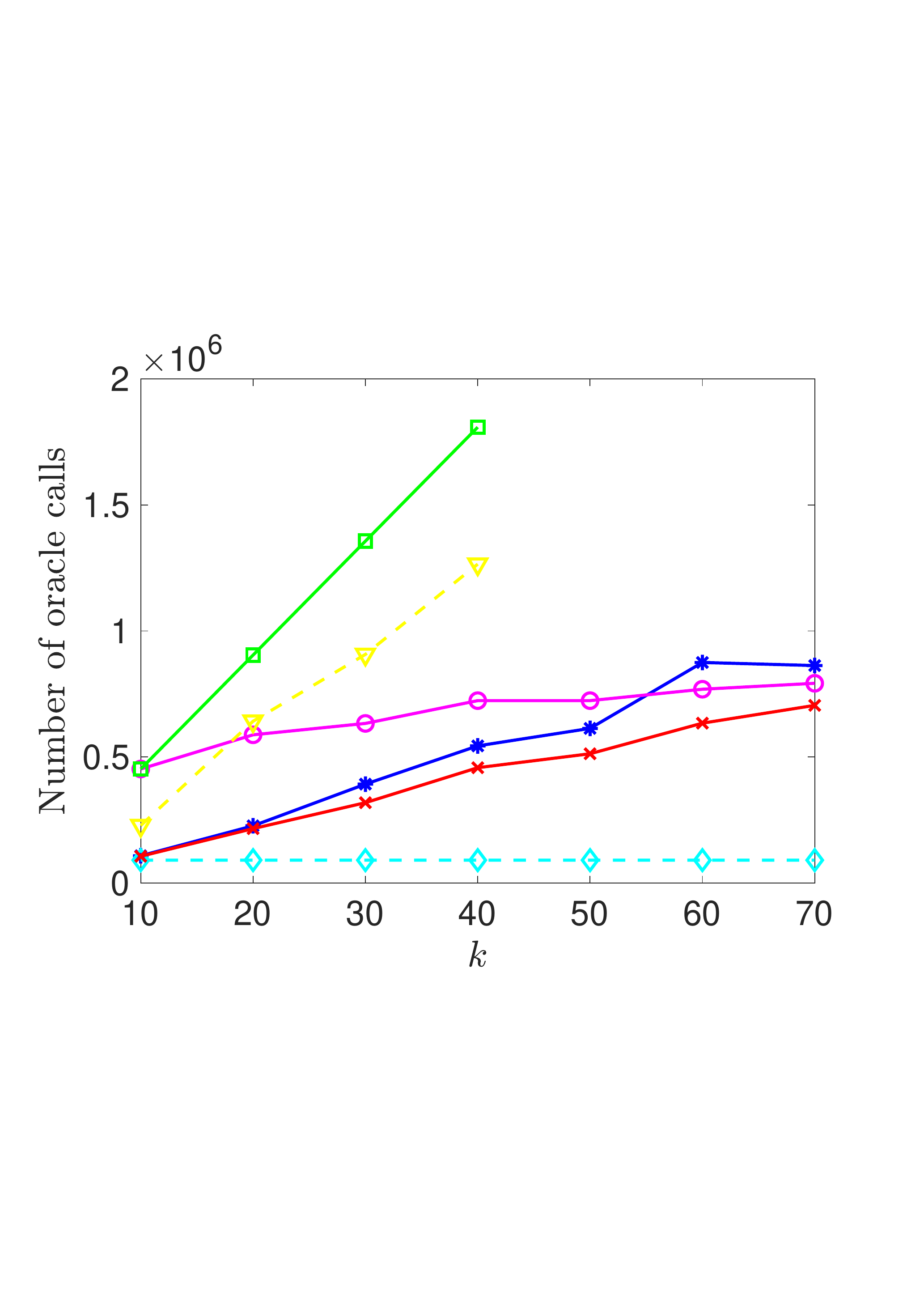}
\caption{\label{fig:bank-oc} Bank-age}
\end{subfigure}
\caption{ \label{fig:results} Performance of \Our-CK, \Our-FKK and \OurNonMonotone-FKK compared to other baselines, in terms of objective value, violation of fairness constraints, and running time, on Movielens, Pokec, Census, and Bank-Marketing datasets.}
\end{figure}

\subsection{Results} \label{sec:results}
We observe that in all the experiments our algorithms make smaller or similar number of oracle calls compared to the baselines in corresponding settings (streaming or sequential).
Moreover, the objective value of the fair solutions obtained by our algorithms
is similar to the unfair baseline solutions, with less than $15\%$ difference.

We also observe that 
the algorithms that do not impose fairness constraints introduce significant bias. 
For example,
\Sieve makes
$150$ errors in the maximum coverage experiment for
$k=200$ (see \cref{fig:pokec-age-err}), and
$30$ errors for $k=70$ in the exemplar-based clustering experiment (see \cref{fig:bank-err}). 
Moreover, even though \MatroidCons satisfies the upper-bounds constraints, it still makes a noticeable amount of errors. For instance, it makes
$20$ errors in the maximum coverage experiment for 
$k=200$ (see \cref{fig:pokec-age-err}), and $100$ errors in the DPP-based summarization experiment  for $k = 600$ (see \cref{fig:census-err}).

\section{Conclusion}
We presented the first streaming approximation algorithms for fair submodular maximization, for both monotone and non-monotone objectives.
Our algorithms efficiently generate {balanced} solutions with respect to a sensitive attribute, while using asymptotically optimal memory.
We empirically demonstrate that fair solutions are often nearly optimal, and that explicitly imposing fairness constraints is necessary to ensure balanced solutions. 
\section*{Broader Impact}
Several recent studies have shown 
that automated data-driven methods can unintentionally lead to bias and discrimination \cite{kay2015unequal, sweeney2013discrimination, bolukbasi2016man, caliskan2017semantics, o2016weapons}. Our proposed algorithms will help guard against these issues in data summarization tasks arising in various settings -- from electing a parliament, over selecting individuals to influence for an outreach program, to selecting content in search engines and news feeds.  
As expected, fairness does come at the cost of a small loss in utility value, as observed in \cref{sec:exp}. It is worth noting that this ``price of fairness'' (i.e., the decrease in optimal objective value when fairness constraints are added) should not be interpreted as fairness leading to a less desirable outcome, but rather as a trade-off between two valuable metrics: the original application-dependent utility, and the fairness utility. 
Our algorithms ensure solutions achieving a close to optimal trade-off.

Finally, despite the generality of the fairness notion we consider, it does not capture certain other notions of fairness considered in the literature (see e.g., \cite{chouldechova2018frontiers, tsang2019}). 
No universal metric of fairness exists. The question of which fairness notion to employ is an active area of research, and will be application dependent.

\begin{ack}
Marwa El Halabi was supported by a DARPA D3M award, NSF CAREER award 1553284,  NSF award 1717610, and by an ONR MURI award. The views, opinions, and/or findings contained in this article are those of the authors and should not be interpreted as representing the official views or policies, either expressed or implied, of the Defense Advanced Research Projects Agency or the Department of Defense.
Slobodan Mitrovi\' c was supported by the Swiss NSF grant No.~P400P2\_191122/1, MIT-IBM Watson AI Lab and Research Collaboration Agreement No.~W1771646, and FinTech@CSAIL. Jakab Tardos has received funding from the European Research Council (ERC) under the European Union’s Horizon 2020 research and innovation programme (grant agreement No 759471).
\end{ack}
 
\bibliography{cite}
\bibliographystyle{plain}

\appendix

\section{Details of \FairGreedy}
\label{sec:proof_of_greedy}

\subsection{Proof of \cref{lem:greedy}}

We remark that,
once we know that extendable sets form a matroid (\cref{lem:matroid}),
the approximation ratio of \FairGreedy can be seen to follow
from the fact that the greedy algorithm for submodular maximization under matroid constraints
achieves a $1/2$-approximation guarantee~\cite{fisher1978analysis}.
For completeness, below we also give a self-contained proof.

\begin{proof}
Let an optimal solution be $O\subseteq V$ and let the output of greedy be $G=\{g_1,\ldots,g_k\}$, where the elements where chosen in the order $g_1,\ldots g_k$ by the algorithm. To prove the lemma, we will show that $f(G)\ge\frac12\cdot f(G\cup O)$.

Let us order the elements of $O$ as $o_1,\ldots,o_k$ in a way that the colors of $g_j$ and $o_j$ coincide as much as possible. Specifically, we want an ordering such that for all $j$
\begin{itemize}
    \item either $g_j$ and $o_j$ are the same color
    \item or, if $c(g_j)=c_1$ and $c(o_j)=c_2$ are different, then $|G\cap V_{c_1}|>|O\cap V_{c_1}|$ and $|G\cap V_{c_2}|<|O\cap V_{c_2}|$,
\end{itemize}
where $c(v)$ denotes the color of element $v\in V$.
Such a matching between elements of $G$ and $O$ can be easily constructed recursively. Indeed, as long as there remain elements of $G$ and $O$ that are the same color match them together; once all remaining elements are of different color match them arbitrarily.

\begin{claim}\label{claim:feasibility}
For any $j$, $G\backslash\{g_j\}\cup\{o_j\}$ is a feasible solution.
\end{claim}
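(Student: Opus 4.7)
My plan is to verify feasibility by a short case analysis on the matching between $G$ and $O$, after a minor refinement of the matching rule to handle shared elements. First I would observe that the construction described in the paper (repeatedly matching same-color pairs, then matching the rest arbitrarily) can be carried out so that every element $x \in G \cap O$ is matched to itself: since $x$ and $x$ trivially share a color, this respects the rule. Under this convention, for each index $j$ either $g_j = o_j$, in which case $(G \setminus \{g_j\}) \cup \{o_j\} = G \in \cF$ and we are done, or $g_j \neq o_j$, and the convention forces $g_j \notin O$ and $o_j \notin G$. In particular, in the nontrivial case $|(G \setminus \{g_j\}) \cup \{o_j\}| = k$, so the global cardinality constraint is met.

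Next I would split the nontrivial case on the color relationship. If $c(g_j) = c(o_j)$, then the color counts of $G' := (G \setminus \{g_j\}) \cup \{o_j\}$ agree with those of $G$ on every color, so all lower and upper bounds are inherited from $G \in \cF$. If $c(g_j) \neq c(o_j)$, only two color counts change: the count in $V_{c(g_j)}$ drops by one and the count in $V_{c(o_j)}$ rises by one. By the matching property, $|G \cap V_{c(g_j)}| > |O \cap V_{c(g_j)}| \geq \ell_{c(g_j)}$, so the decrement keeps the lower bound on $c(g_j)$ intact; symmetrically $|G \cap V_{c(o_j)}| < |O \cap V_{c(o_j)}| \leq u_{c(o_j)}$, so the increment keeps the upper bound on $c(o_j)$ intact. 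All other color counts are unchanged, and the cardinality is exactly $k$, so $G' \in \cF$.

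The step I expect to be the main obstacle is the matching convention itself. Without pinning down that elements of $G \cap O$ are matched to themselves before any other same-color pairs, one can exhibit small configurations in which $o_j \in G$ and $c(o_j) = c(g_j)$, making $G' = G \setminus \{g_j\}$ drop the count of color $c(g_j)$ below $\ell_{c(g_j)}$. Fortunately this is only a presentational fix: both the ``self-match first'' rule and the same-color rule are compatible, so the refinement is free, after which the case analysis above becomes a direct counting argument using only the feasibility of $G$ and $O$.
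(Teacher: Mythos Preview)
Your argument is correct and mirrors the paper's two-case analysis (same color versus different color, using the matching property to control the two affected counts in the latter case). You also catch a genuine subtlety the paper's proof elides: its assertion that in the same-color case ``exchanging them does not change the color profile'' tacitly assumes $o_j\notin G$, which the matching rule as stated does not guarantee (e.g.\ with a single color, $k=\ell_1=2$, $G=\{a,b\}$, $O=\{b,c\}$, the same-color matching $g_1=a\leftrightarrow o_1=b$ yields $(G\setminus\{g_1\})\cup\{o_1\}=\{b\}$, which violates the lower bound). Your self-match-first refinement is the natural fix, is fully compatible with the same-color-first rule, and makes the case analysis airtight.
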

Indeed, if $g_j$ and $o_j$ are the same color, exchanging them does not change the color profile of $G$ and it remains feasible. On the other hand, if $c(g_j)=c_1$ and $c(o_j)=c_2$ are different, then $|G\cap V_{c_1}|>|O\cap V_{c_1}|\ge\ell_{c_1}$, and removing $g_j$ from $G$ does not violate any conditions. Similarly, $|G\cap V_{c_2}|<|O\cap V_{c_2}|\ge u_{c_2}$ and adding $o_j$ to $G$ does not violate any conditions either.

From \cref{claim:feasibility} it follows that $g_1,\ldots,g_{j-1},o_j$ is a feasible partial solution (see \cref{def:extendable}). Therefore, by the definition of \textsc{FairGreedy}, $f(g_j|g_1,\ldots,g_{j-1})\ge f(o_j|g_1,\ldots,g_{j-1})$.

Therefore,
\begin{align*}
f(G)&=\sum_{j=1}^kf(g_j|g_1,\dots,g_{j-1})\\ &\ge\sum_{j=1}^kf(o_j|g_1,\ldots,g_{j-1})\\ &\ge\sum_{j=1}^kf(o_j|g_1,\ldots,g_k,o_1,\ldots,o_{j-1})\\ &=f(O|G),
\end{align*}

and so
\[2f(G)\ge f(O\cup G)\ge f(O).\]
\end{proof}

\subsection{Checking extendability}
\label{sec:checking_extendability}

In order for Algorithm~\ref{alg:simplegreedy} to run in time $O(|V|k)$ we must solve the problem of generating the set $U=\{e \in V \mid S + e \text{ is extendable}\}$ in $O(|V|)$ time. That is, we must be able to check if adding a single element $e$ to our set $S$ would maintain extendability in $O(1)$ time.

This can be done by simply maintaining the counts $t_c = |S \cap V_c|$ of elements of each color in $S$, as well as the sum $Q= \sum_{c=1}^C \max(t_c, \ell_c)$. Recall \cref{fact:extendable} which states that $S$ is extendable if $t_c \le u_c$ for each $c$ and $Q \le k$.

At the beginning of our algorithm we initialize these variables. Then, whenever a potential extension $e$ of color $c$ is considered, we call $\textsc{Candidate}(c)$ to determine whether adding it would maintain extendability. Once we augment $S$ with an element $e$ of color $c$, we update the stored variables using $\textsc{Update}(c)$.

\begin{algorithm}[H]
	\caption{Checking extendability \label{alg:maintanance}}
	\begin{algorithmic}
		\Procedure{Initialize}{}
			\For{$c\in[C]$}
				\State $t_c\gets0$
			\EndFor
			\State $Q\gets\sum_{c=1}^C \ell_c$
		\EndProcedure
		\Procedure{Update}{$c$}
			\State $t_c\gets t_c+1$
			\If{$t_c > \ell_c$}
				\State $Q\gets Q+1$
			\EndIf
		\EndProcedure
		\Procedure{Candidate}{$c$}
			\If{$t_c=u_c$}
				\State \Return {\bf false}
			\EndIf
			\If{$t_c<\ell_c$}
				\State \Return {\bf true}
			\EndIf
			\If{$\ell_c\le t_c<u_c$}
				\State \Return $Q<k$
			\EndIf
		\EndProcedure
	\end{algorithmic}
\end{algorithm}

To implement the non-monotone submodular maximization algorithm of \cite{Feldman2018} which we use in \cref{sec:non-monotone-alg}, it is also useful to be able to verify whether a pair of elements can be swapped in the current solution. Suppose we are trying to add element $e_1$ of color $c_1$ to $S$, while removing element $e_2$ of color $c_2$. To verify if this is legal, we call \textsc{Swap}$(c_1,c_2)$.

\begin{algorithm}[H]
	\caption{Checking extendability}
	\label{alg:swap}
	\begin{algorithmic}
		\Procedure{Swap}{$c_1,c_2$}
			\If{$c_1=c_2$}
				\State \Return {\bf true}
			\EndIf
			\If{$t_{c_1}=u_{c_1}$}
				\State \Return {\bf false}
			\EndIf
			\If{$Q=k$ {\bf and} $t_{c_1}\ge\ell_{c_1}$ {\bf and} $t_{c_2}\le\ell_{c_2}$}
				\State \Return {\bf false}
			\Else
				\State \Return {\bf true}
			\EndIf
		\EndProcedure
	\end{algorithmic}
\end{algorithm}

\section{Monotone Streaming -- Proofs}

\subsection{Proof of \cref{lem:matroid}}
\label{sec:proof_of_lem_matroid}

\begin{proof}
Let $\cB$ consist of all maximal sets in $\cF$.
We will show that $\cB$ satisfies the following two axioms.
\begin{enumerate}
	\item[(B1)] $\cB \ne \emptyset$.
	\item[(B2)] If $B_1, B_2 \in \cB$ and $x \in B_1 \setminus B_2$, then there exists $y \in B_2 \setminus B_1$ such that $B_1 - x + y \in \cB$.
\end{enumerate}
These axioms imply (see e.g.~\cite[Theorem 1.2.3]{Oxley06}) that the downward closure (collection of all subsets) of $\cB$ is a matroid (having $\cB$ as its set of bases). However, the downward closure of $\cB$ is equal to $\clF$, as any subset of $V$ that can be extended to a feasible solution can also be extended to a maximal feasible solution.
Therefore we are left with proving (B1-B2). As we had assumed that $\cF \ne \emptyset$, we also have $\cB \ne \emptyset$, which establishes (B1).

For (B2), let $B_1, B_2 \in \cB$ and $x \in B_1 \setminus B_2$.
Let $c$ be the color of $x$.
A simple case is when $B_2 \setminus B_1$ contains some element $y \in V_c$.
Then $B_1 - x + y$
has the same number of elements of each color as $B_1$,
thus it is also in $\cB$.
Now consider the other case, i.e., that
$V_c \cap B_2 \subseteq B_1 - x$.
Then we have
\begin{equation} \label{eq:bounds}
u_c - 1 \ge |V_c \cap (B_1 - x)| \ge |V_c \cap B_2| \ge \ell_c \, .
\end{equation}
There must be another color $d$ where $B_2$ has more elements than $B_1$,
for otherwise $B_2 + x$ would be feasible, contradicting the maximality of $B_2$.
We claim that picking any element $y \in V_d \cap (B_2 \setminus B_1)$
yields a maximal feasible solution $B_1 - x + y \in \cB$.
The lower bounds are clearly satisfied already for $B_1 - x$ (for color $c$, this follows by~\eqref{eq:bounds}).
The upper bound for color~$c$ is satisfied by~\eqref{eq:bounds},
and for color~$d$ since $|V_d \cap (B_1 - x + y)| = |V_d \cap B_1| + 1 \le |V_d \cap B_2| \le u_d$.
The global upper bound is satisfied as $|B_1 - x + y| = |B_1| \le k$.
To show maximality of $B_1 - x + y$,
we note that any maximal set in $\cF$ has the same size,
namely $\min(k, \sum_{c} \min(u_c, |V_c|))$,
and that $B_1 - x + y$ is already of the same size as $B_1$, which is maximal.
\end{proof}

\subsection{Proof of \cref{thm:monotone_blackbox}}
\label{sec:proof_of_monotone_blackbox}

\begin{proof}
	The feasibility of $S$ follows as $S_\cA$ is extendable and by \cref{fact:extendable}. If $\cA$ is an $\alpha$-approximation algorithm, then it
	returns a solution $S_\cA$ of value at least $\alpha$ times that of the best extendable set,
	and every feasible set is extendable.
	Adding elements does not decrease the value, as $f$ is monotone.
	
	Our extra memory usage is $|\bigcup_c B_c| = \sum_c \ell_c \le k$.
\end{proof}

\section{Algorithms for Matroid-Constrained Submodular Maximization}
\label{sec:matroid_algorithms}

In this section we describe the streaming algorithms for submodular maximization
under a matroid constraint
of Chakrabarti and Kale~\cite{chakrabarti2014submodular} (monotone $1/4$-approximation)
and Feldman, Karbasi and Kazemi~\cite{Feldman2018} (non-monotone $1/5.82$-approximation).
We also describe how to implement \Our, together with the former algorithm,
so as to obtain nearly-linear runtime and oracle complexity.

Both algorithms are given access to a matroid $\cM \subseteq 2^V$ in the form of an independence oracle.
To differentiate between querying $f$ and $\cM$, we refer to the former as oracle calls and to the latter as matroid queries.

\subsection{The monotone case} \label{app:monotone_case}

\begin{algorithm}
    \caption{Chakrabarti-Kale~\cite{chakrabarti2014submodular} (monotone)}
    \begin{algorithmic}[1]
        \State $S \leftarrow \emptyset$
        \For{every arriving element $e$}
        	\State $w(e) \leftarrow f(e \mid S)$
        	\If{$S+e \in \cM$} \label{line:is_independent}
        		\State $S \leftarrow S+e$
        	\Else
        		\State $U \leftarrow \{ e' \in S : S+e-e' \in \cM \}$ \label{line:U}
        		\State $e' \leftarrow \argmin_{e' \in U} w(e')$ \label{line:argmin}
        		\If{$w(e) \ge 2 w(e')$} \label{line:swap_condition}
        			\State $S \leftarrow S + e - e'$
				\EndIf
			\EndIf
		\EndFor
		\State \Return $S$
    \end{algorithmic}
	\label{alg:ck}
\end{algorithm}

Let us look at the per-element oracle complexity and runtime.
\cref{alg:ck} clearly makes only two oracle calls (to compute $f(e \mid S)$).
As for the runtime,
it is dominated by \cref{line:is_independent,line:U,line:argmin}.
Clearly, these can be implemented naively using $O(k)$ time and matroid queries,
where $k$ is the rank of matroid $\cM$ (we have $|S| \le k$).
The runtimes of these queries would further depend on the matroid in question.

However, for special matroids $\cM$ the implementation can be optimized.
Let us first consider the special case of $\cM$ being the $k$-uniform matroid
($S \in \cM \Leftrightarrow |S| \le k$):
in other words, the setting of cardinality-constrained submodular maximization.
In that case, \cref{line:is_independent} takes $O(1)$ time,
and \cref{line:U} becomes just $U \leftarrow S$.
The runtime then becomes dominated by finding the element $e' \in S$ with the lowest $w$-weight.
If we maintain a priority queue $P$ containing $S$ sorted by $w$, then this can be done in $O(\log k)$ time.

Now we can extend this idea to $\cM$ being the extendability matroid (see \cref{def:extendable,lem:matroid}) used by \Our. That is, we prove \cref{thm:main-monotone-streaming}. Let us restate it again for convenience.

\mainmonotonestreaming*
\begin{proof}
Recall that \Our (\cref{alg:our}) uses \cref{alg:ck} as $\cA$.
By \cref{thm:monotone_blackbox},
\Our returns a feasible solution that is $1/4$-approximate.
It makes $2$ oracle calls per element (these are made by \cref{alg:ck}, see above).
We are left with the runtime.

We maintain the extendability data structure from \cref{sec:checking_extendability}.
This allows us to implement \cref{line:is_independent} in constant time.
Now let us consider the problem of finding the minimal $w(e')$ among $e' \in U$,
i.e., among those
elements $e' \in S$ that have $S + e - e' \in \cM$.
Clearly, whether an element $e' \in S$ is in $U$ or not depends only on its color $c'$.
We will say that color $c'$ is \emph{good} if elements $e' \in S$ of color $c'$ are in $U$.
Let $c$ be the color of $e$.
Following \cref{alg:swap}, we have the following logic:
\begin{itemize}
	\item if $t_c = u_c$, then only $c$ is good,
	\item otherwise, if $Q < k$ or $t_c < \ell_c$, then every color is good,
	\item otherwise, the good colors are $c$ and those colors $c'$ that have $t_{c'} > \ell_{c'}$.
\end{itemize}
To be able to quickly find the minimum-weight good-colored element in $S$, we will maintain a number of priority queues:
\begin{itemize}
	\item (as before) $P$ containing $S$ sorted by $w$,
	\item $P_c$ for each color $c$, where we keep elements in $S \cap V_c$ sorted by $w$,
	\item $P'$, containing colors rather than elements: in $P'$ we keep those colors $c'$ for which $t_{c'} > \ell_{c'}$, sorted by $\min_{e' \in S \cap V_{c'}} w(e')$.
\end{itemize}
It is not hard to see that this data structure can be maintained in $O(\log k)$ time per element, and that using it we can implement the logic above in the same time.
\end{proof}

\paragraph{Our implementation}
In the experimental evaluations, we use a variant of \Our
where the condition in \cref{line:swap_condition} of \cref{alg:ck} is replaced by
the more direct $f(S+e-e') \ge f(S)$.
We find that this yields better solutions in practice.
We still make only two oracle calls per element;
this is made possible by storing the value $f(S)$ between calls.
For simplicity, we also do not use the priority-queue-based data structure from the above proof of \cref{thm:main-monotone-streaming}.
This has no bearing on the reported experimental results, as we measure oracle calls rather than runtime.

\subsection{The non-monotone case}  \label{app:non_monotone_case}

The non-monotone algorithm of Feldman, Karbasi and Kazemi~\cite{Feldman2018},
which is used by \OurNonMonotone, is similar to \cref{alg:ck}.
The main differences are that the algorithm subsamples incoming elements,
and that instead of caching the marginal contribution of every element at the time it is added (as $w(e)$),
it always uses the contribution of an element $e$ to the part of the current solution that arrived before $e$.
For completeness, we give it as \cref{alg:fkk}.

\begin{algorithm}
    \caption{Feldman, Karbasi and Kazemi~\cite{Feldman2018} (non-monotone)}
    \begin{algorithmic}[1]
        \State $S \leftarrow \emptyset$
        \For{every arriving element $e$}
        	\State with probability $2/3$ \Return \label{line:subsample}
        	\If{$S+e \in \cM$}
        		\State $S \leftarrow S+e$
        	\Else
        		\State $U \leftarrow \{ e' \in S : S+e-e' \in \cM \}$
        		\State $e' \leftarrow \argmin_{e' \in U} f(e' : S)$
        		\If{$f(e \mid S) \ge 2 f(e' : S)$} \label{line:swap_condition2}
        			\State $S \leftarrow S + e - e'$
				\EndIf
			\EndIf
		\EndFor
	\State \Return $S$
    \end{algorithmic}
	\label{alg:fkk}
\end{algorithm}

Here we use the notation $f(e' : S)$ to denote $f(e' \mid S')$, where $S'$ consists of those elements of $S$ that had arrived on the stream before $e'$. Note that this is different from $w(e')$ from \cref{alg:ck}.

\cref{alg:fkk} uses $O(k)$ oracle calls and $O(k)$ matroid queries per element.

\paragraph{Our implementation}
As previously,
in the experimental evaluations, in \OurNonMonotone we use a variant of \cref{alg:fkk} 
where the condition in \cref{line:swap_condition2} is replaced by
the more direct $f(S+e-e') \ge f(S)$.
We also use $f(e' \mid S)$ in lieu of $f(e' : S)$.
Finally,
whenever we apply \cref{alg:fkk} in a monotone setting, we omit \cref{line:subsample}.


\section{Non-monotone Streaming}

\subsection{Non-monotone algorithm} \label{algo:nonmonotoneproofs}

We make use of the following known lemma to bound the loss in value resulting from the addition of backup elements.

\begin{lemma}[ {\cite[Lemma 2.2]{Buchbinder2014} }]\label{lem:rndset-approx}
Let $g: 2^V \to \bbRp$ be a non-negative submodular function, and let $B$
be a random subset of $V$ containing every element of $V$ with probability at most $p$ (not necessarily independently). Then $\EE[g(B)] \geq  (1-p) g(\emptyset)$.
\end{lemma}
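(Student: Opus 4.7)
I plan to prove the lemma by induction on $|V|$, combining non-negativity and submodularity of $g$.

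\textbf{Base case.} If $V = \emptyset$, then $B = \emptyset$ almost surely, and $\EE[g(B)] = g(\emptyset) \ge (1-p)\,g(\emptyset)$ follows immediately from $g(\emptyset) \ge 0$.

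\textbf{Inductive step.} Fix an arbitrary element $v^* \in V$, let $V' = V \setminus \{v^*\}$, and define $B' := B \cap V'$, viewed as a random subset of $V'$. The key observation is that the \emph{unconditional} marginals of $B'$ on $V'$ satisfy $\Pr[v \in B'] = \Pr[v \in B] \le p$ for every $v \in V'$; no conditioning on $v^*$ is needed (which is important because conditional marginals $\Pr[v \in B' \mid v^* \in B]$ can easily exceed $p$). Applying the inductive hypothesis to the restriction $g \restriction 2^{V'}$ (again non-negative and submodular) and the random set $B'$ yields $\EE[g(B')] \ge (1-p)\,g(\emptyset)$. It then remains to bridge from $\EE[g(B')]$ to $\EE[g(B)]$ via the decomposition
\[
\EE[g(B)] = \EE[g(B')] + \EE\!\left[\mathbb{1}[v^* \in B]\, g(v^* \mid B')\right].
\]
In the monotone case the correction term is non-negative pointwise, and we are done.

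\textbf{Main obstacle.} The correction term $\EE[\mathbb{1}[v^* \in B]\, g(v^* \mid B')]$ can be negative when $g$ is non-monotone, and as noted above, conditioning on $v^*$ to redistribute the analysis is problematic because it breaks the $\le p$ marginal constraint. The route I would take is to use submodularity to lower-bound $g(v^* \mid B') \ge g(v^* \mid V')$ pointwise, and then combine this with the identity $g(V) = g(V') + g(v^* \mid V') \ge 0$ (from non-negativity of $g$) together with $\Pr[v^* \in B] \le p$ to absorb the negative shortfall into the slack $p\,g(\emptyset)$ that separates $g(\emptyset)$ from the target $(1-p)\,g(\emptyset)$. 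A cleaner alternative I would try in parallel is to strengthen the inductive hypothesis to the symmetric inequality $\EE[g(B)] \ge (1-p)\,g(\emptyset) + p\,g(V)$: this version tracks both endpoints of $g$ simultaneously, so that applying the IH to both $g_0(S) := g(S)$ and $g_1(S) := g(S \cup \{v^*\})$ on $V'$ produces matching terms that can be combined using submodularity to close the induction; the claimed bound then follows by dropping the non-negative term $p\,g(V)$.
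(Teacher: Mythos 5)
First, note that the paper does not actually prove this lemma --- it is imported verbatim from \cite[Lemma 2.2]{Buchbinder2014} --- so there is no in-paper argument to compare against, and I evaluate your proposal on its own terms. Your setup is sound: the induction, the decomposition $\EE[g(B)] = \EE[g(B')] + \EE\left[\1[v^*\in B]\, g(v^*\mid B')\right]$, the observation that unconditional marginals survive restriction to $V'$, and the identification of the negative correction term as the crux are all correct. The problem is that neither of your two ways of closing the induction works, and they fail at the same point. For the first route: submodularity gives $g(v^*\mid B') \ge g(v^*\mid V')$ pointwise, so when $g(v^*\mid V')<0$ the correction term is bounded below by $p\, g(v^*\mid V') \ge -p\, g(V')$ (using only $g(V)\ge 0$). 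The quantity $g(V')$ is unrelated to $g(\emptyset)$ and can be arbitrarily large, so this shortfall cannot be absorbed into the slack $p\,g(\emptyset)$; to absorb it you would need precisely $\EE[g(B')] \ge (1-p)g(\emptyset) + p\,g(V')$, which is your second route. But the strengthened hypothesis $\EE[g(B)] \ge (1-p)g(\emptyset) + p\,g(V)$ is false under the ``probability \emph{at most} $p$'' assumption: take $g(S)=|S|$ and $B=\emptyset$ almost surely (every marginal is $0 \le p$), so that $\EE[g(B)] = 0 < p\,|V|$. That two-endpoint inequality is the Feige--Mirrokni--Vondr\'ak bound and requires every marginal to equal \emph{exactly} $p$ --- a regime your induction cannot stay in, since restricting to $V'$ preserves only the upper bound on marginals. (Note also that one cannot dodge this by choosing $v^*$ with $g(v^*\mid V')\ge 0$: for the complete-graph cut function every element has a negative last marginal.)

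The missing idea is to compare $\EE[g(B)]$ not to the two endpoints $g(\emptyset)$ and $g(V)$ but to the Lov\'asz extension $\hat g$ evaluated at the full marginal vector $x$, $x_v = \Pr[v\in B]$. For submodular $g$ the Lov\'asz extension coincides with the convex closure, so $\EE[g(B)] \ge \hat g(x)$ for \emph{any} coupling with these marginals; writing $\hat g(x) = \int_0^1 g(\{v : x_v \ge \theta\})\,d\theta$ and noting that the level set is empty for every $\theta > p$ gives $\hat g(x) \ge (1-p)\,g(\emptyset)$ directly, with non-negativity of $g$ handling the range $\theta \in [0,p]$. This is essentially the proof in \cite{Buchbinder2014}. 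If you insist on an elementary induction, you must carry the whole marginal vector (equivalently, reprove the convex-closure inequality by an uncrossing argument) rather than the single scalar $p$.
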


\altnonmonotone*
\begin{proof}
By assumption, we have $\EE[f(S_\cA)] \geq \alpha \max_{S \in \tilde{\cF}} f(S)$, and since $ \cF \subseteq \clF$, we have  $\EE[f(S_\cA)] \geq \alpha f(\OPT)$.
We define $g: 2^V \to \bbRp$ to be the function $g(S) = f(S \cup S_\cA)$, and $B = S\setminus S_\cA$ the set of backup elements added to $S_\cA$.
Since $B$ contains every element in $V$ with probability at most $1-q = \max_{c} \frac{\ell_c}{n_c}$, then by \cref{lem:rndset-approx} $\EE[g(B)] \geq q\cdot g(\emptyset)$.
 It follows then that \[ \EE[f(S)]  \geq q \EE[f(S_\cA)] \geq q \alpha f(\OPT).\]

\end{proof}

\subsection{Non-monotone hardness}\label{sec:hardness}

In this section we will show that our assumption that the dependence of our approximation ratio on $q=1-\max_{c\in[C]}\ell_c/n_c$ is necessary. Indeed, to get an approximation ratio better than $q$ for fair non-monotone submodular maximization requires nearly linear space. We prove this by reduction to the INDEX problem which we define below.

\begin{definition}
The INDEX problem is a two party communication problem. In it we have two parties, Alice and Bob. Alice receives $x$, a bit string of length $n$, and Bob receives a single index $i^*$ between $1$ and $n$. The aim of problem is for bob to output $x_{i^*}$.
\end{definition}

\begin{theorem}\label{th:INDEX}\cite{DBLP:journals/cc/KremerNR99}
The one way communication complexity of index, $R_{2/3}^{pub}\ge n/100$. That is any one way communication protocol that solves INDEX on any input with probability at least $2/3$ requires at least $n/100$ bits of communication.
\end{theorem}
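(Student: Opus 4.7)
The plan is to reduce to a distributional lower bound for deterministic protocols via Yao's minimax principle, and then run a short information-theoretic argument based on Fano's inequality. The hard distribution is the product distribution in which $x$ is uniform in $\{0,1\}^n$ and $i^*$ is uniform in $[n]$, chosen independently. Under Yao's minimax, any public-coin randomized protocol whose worst-case error probability is at most $1/3$ communicates at least as many bits as the best deterministic protocol whose expected error under this distribution is at most $1/3$. So I may assume Alice's message $M = M(x)$ is a deterministic function of $x$ alone, and Bob outputs $B(M, i^*)$ with $\Pr_{x, i^*}[B(M, i^*) \neq x_{i^*}] \leq 1/3$. The goal is to show the length of $M$ must be at least $n/100$ bits.

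Next, I would apply Fano's inequality to the binary variable $x_{i^*}$ given the pair $(M, i^*)$. Since Bob's predictor $B(M, i^*)$ errs with probability at most $1/3$, Fano gives $H(x_{i^*} \mid M, i^*) \leq H_2(1/3)$, where $H_2$ is the binary entropy function. As $x_{i^*}$ is a uniform bit independent of $i^*$, we have $H(x_{i^*} \mid i^*) = 1$, so $I(x_{i^*}; M \mid i^*) \geq 1 - H_2(1/3)$. Using the uniformity of $i^*$, this conditional mutual information unfolds as $I(x_{i^*}; M \mid i^*) = \frac{1}{n}\sum_{i=1}^n I(x_i; M)$, giving $\sum_{i=1}^n I(x_i; M) \geq n\bigl(1 - H_2(1/3)\bigr)$.

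The last step is to chain in two standard information-theoretic inequalities. First, since the coordinates $x_1, \ldots, x_n$ are mutually independent and conditional entropy is subadditive, $H(x \mid M) \leq \sum_i H(x_i \mid M)$, which yields $I(x; M) = H(x) - H(x \mid M) \geq \sum_i I(x_i; M)$. Second, the length of $M$ in bits, call it $\lvert M \rvert$, satisfies $\lvert M \rvert \geq H(M) \geq I(x; M)$. Combining everything produces $\lvert M \rvert \geq n\bigl(1 - H_2(1/3)\bigr)$. A direct calculation gives $H_2(1/3) = \log_2 3 - 2/3 < 0.919$, hence $1 - H_2(1/3) > 0.08 > 1/100$, and we conclude $\lvert M \rvert \geq n/100$ as required.

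The main obstacle is not conceptual but rather setup: one must be careful that Yao's minimax is applied correctly for public-coin protocols (so that the communication cost we bound is genuinely an upper bound on the randomized worst-case cost), and that $\lvert M \rvert \geq H(M)$ is interpreted either in the prefix-free encoding sense or as worst-case length. Once this bookkeeping is done, the Fano-plus-subadditivity argument is essentially two lines and gives a constant-factor improvement over $n/100$, leaving ample slack in the stated bound.
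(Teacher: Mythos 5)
The paper does not prove this statement at all: it is imported as a black box from Kremer--Nisan--Ron \cite{DBLP:journals/cc/KremerNR99}, so there is no in-paper proof to compare against. Your argument is the (by now standard) information-theoretic proof of the one-way INDEX lower bound, and it is correct. The easy direction of Yao's principle does apply to public-coin protocols (averaging over the shared randomness yields a deterministic protocol with error at most $1/3$ under the uniform product distribution); Fano's inequality for a binary target gives $H(x_{i^*}\mid M,i^*)\le H_2(1/3)$; the identity $I(x_{i^*};M\mid i^*)=\frac1n\sum_i I(x_i;M)$ uses that $(x,M)$ is independent of $i^*$, which holds here; and the superadditivity step $I(x;M)\ge\sum_i I(x_i;M)$ is valid because the coordinates $x_i$ are independent, so $H(x)=\sum_i H(x_i)$ while $H(x\mid M)\le\sum_i H(x_i\mid M)$. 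The numerics are right: $1-H_2(1/3)=1-\log_2 3+2/3\approx 0.082$, so you in fact obtain a constant roughly eight times better than the stated $n/100$ (the original reference argues via a VC-dimension characterization of one-way complexity under product distributions, which is where the looser constant comes from). The only caveats are the bookkeeping ones you already flag --- $|M|\ge H(M)$ needs the worst-case-length or prefix-free convention, and one should note separately that zero communication cannot achieve success $2/3$, which covers small $n$ --- neither of which is a gap.
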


We use this to prove hardness of the approximate maximization of non-monotone submodular functions under fairness constraints. Specifically we will show a reduction from INDEX to this problem.

\hardness*
\begin{proof}
Suppose such an algorithm exists. We will produce an instance of such submodular maximization that allows us to solve INDEX with the same space complexity and success probability.

The submodular function we define will be a cut function. That is, we define some directed graph $D=(V,A)$ on the universe $V$. The function evaluated at $S\subseteq V$ will be the size of the $(S,\overline S)$ cut. That is
$$f(S)=|\{(v,w)\in A : v\in S\ \wedge\ w\not\in S\}|.$$

It is easy to see that this is indeed a non-negative submodular function.

It remains to define $V$ and $D$. Suppose Alice and Bob receive an input for INDEX for length $n$. Let the input of Alice be $x$ and the input of Bob be $i^*$. We define $V$ and $A$ based on this input

Let $a/b$ be a rational approximation of $q$ in the sense that $a,b\in\mathbb N$ and $q\le a/b<q+\epsilon$. Such $a$ and $b$ can always be chosen such that $b=O(1/\epsilon)$. Let $V$ consist of three colors $V_1$, $V_2$, and $V_3$ where $V_1=\{v_i:i\in[n],x_i=1\}\cup\{w_i:i\in[n],x_i=0\}$, $V_2=\{y_{i^*}^j:j\in[b]\}$ and $V_3=\{z^j:j\in[b]\}$. Let the color-wise constraints be $\ell_1=u_1=1$, $\ell_2=u_2=b-a$, and $\ell_3=u_3=0$, which satisfies $1-\max_{c\in[3]}\ell_c/n_c=a/b\ge q$. If the element $u_{i^*}$ appears (that is if $x_{i^*}=1$), it is connected to $V_3$, that is $A$ contains all edges in $\{v_{i^*}\}\times V_3$. All other elements of $V_1$ are connected to all elements of $V_2$, that is $A$ contains all edges in $V_1\backslash\{v_{i^*}\}\times V_2$.

Alice first runs the algorithm for submodular maximization on a stream consisting of $V_1$. Since $f|_{V_1}$ is simply cardinality times $b$, Alice can answer all oracle queries without knowing Bob's input. Alice then passes the state of the algorithm to Bob, who inputs the rest of the stream: $V_2$ and $V_3$. As we show below, if $x_{i^*}=1$, the optimal solution is $b$, while if $x_{i^*}=0$, the optimal solution is only $a$. Therefore, Bob can correctly solve INDEX by reading off the output of the $(q+\epsilon)$-approximation algorithm, since $q+\epsilon>a/b$.

Indeed, if $x_{i^*}=0$ and $v_{i^*}\not\in V$, then
$$f(S)=|S\cap V_1|\cdot(b-|S\cap V_2|).$$
Given the strict color-wise constraints this is always equal to $b-a$. On the other hand, if $x_{i^*}=1$ and $v_{i^*}\in V$ then we have the optimal solution $$S=\{v_{i^*}\}\cup\{y_{i^*}^j:j\in[a]\}$$ which has value $b$.

Since INDEX needs $\Omega(n)$ memory to solve, the algorithm for fair submodular maximization must have $\Omega(n)$ memory as well.
\end{proof}

\end{document}